\theoremstyle{plain}
\newtheorem{theorem}{Theorem}[section]
\newtheorem{proposition}[theorem]{Proposition}
\theoremstyle{definition}
\newtheorem{definition}[theorem]{Definition}
\newtheorem{assumption}[theorem]{Assumption}
\newtheorem{remark}[theorem]{Remark}
\title{Outlier-Robust Gromov-Wasserstein for Graph Data}
\author{%
  Lemin Kong 
    \\
  CUHK\\
  \texttt{lkong@se.cuhk.edu.hk} \\
  \And
  Jiajin Li \\
  Stanford University \\
  \texttt{jiajinli@stanford.edu} \\
  \AND
  Jianheng Tang \\
  HKUST \\
  \texttt{jtangbf@connect.ust.hk} \\
  \And
  Anthony Man-Cho So \\
  CUHK \\
  \texttt{manchoso@se.cuhk.edu.hk} \\
}
\begin{document}

\maketitle

\begin{abstract}
  Gromov-Wasserstein (GW) distance is a powerful tool for comparing and aligning probability distributions supported on different metric spaces. Recently, GW has become the main modeling technique for aligning heterogeneous data for a wide range of graph learning tasks. However, the GW distance is known to be highly sensitive to outliers, which can result in large inaccuracies if the outliers are given the same weight as other samples in the objective function. To mitigate this issue, we introduce a new and robust version of the GW distance called RGW. RGW features optimistically perturbed marginal constraints within a Kullback-Leibler divergence-based ambiguity set. To make the benefits of RGW more accessible in practice, we develop a computationally efficient and theoretically provable procedure using Bregman proximal alternating linearized minimization algorithm. Through extensive experimentation, we validate our theoretical results and demonstrate the effectiveness of RGW on real-world graph learning tasks, such as subgraph matching and partial shape correspondence.
\end{abstract}

\section{Introduction}

Gromov-Wasserstein distance (GW) \citep{memoli2007use,memoli2011gromov} acts as a main model tool in data science  to compare data distributions on unaligned metric spaces. Recently, it has received much attention across a host of applications in data analysis, e.g., shape correspondence \citep{li2023a, peyre2016gromov, solomon2016entropic, memoli2009spectral},  graph alignment and partition \citep{tang2023robust, tang2023FGWEA, chowdhury2021generalized, gao2021unsupervised, chowdhury2019gromov, xu2019scalable}, graph embedding and classification \citep{vincent2021online, xu2022representing}, unsupervised word embedding and translation \citep{alvarez2018gromov, grave2019unsupervised},  generative modeling across incomparable spaces \citep{bunne2019learning, xu2021learning}.

In practice, the robustness of GW distance suffers heavily from its sensitivity to outliers. 
Here, outliers mean the samples with large noise, which usually are far away from the clean samples or have different structures from the clean samples. The hard constraints on the marginals in the Gromov-Wasserstein distance require all the mass in the source distribution to be entirely transported to the target distribution, making it highly sensitive to outliers. When the outliers are weighted similarly as other clean samples, even a small fraction of outliers corrupted can largely impact the GW distance value and the  optimal coupling, which is unsatisfactory in real-world applications. 

\begin{figure}[!htbp]
	\centering
    \subfigure[Clean dataset]{\includegraphics[width=0.3\textwidth]{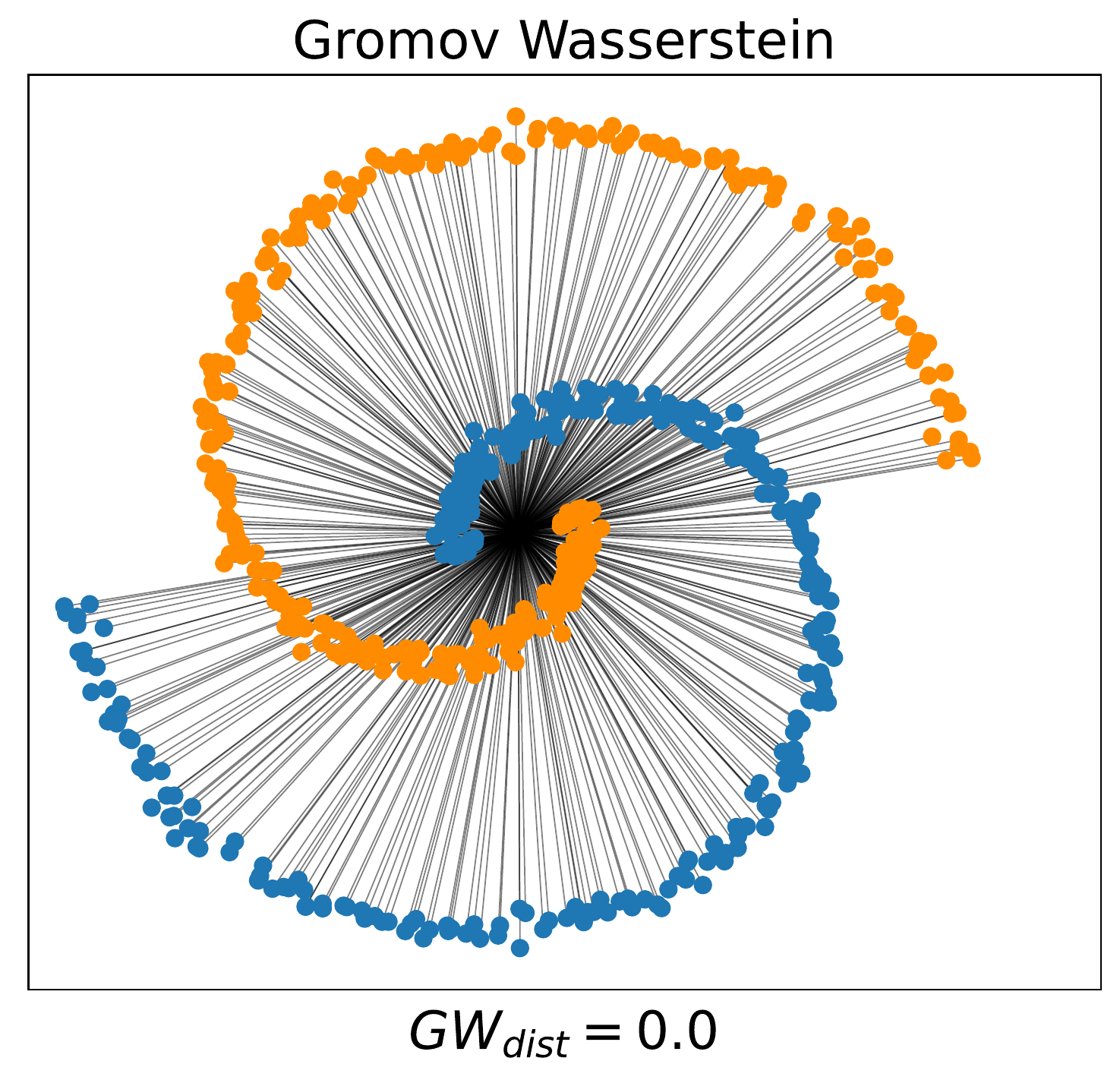}} 
    \subfigure[Dataset with outliers]{\includegraphics[width=0.3\textwidth]{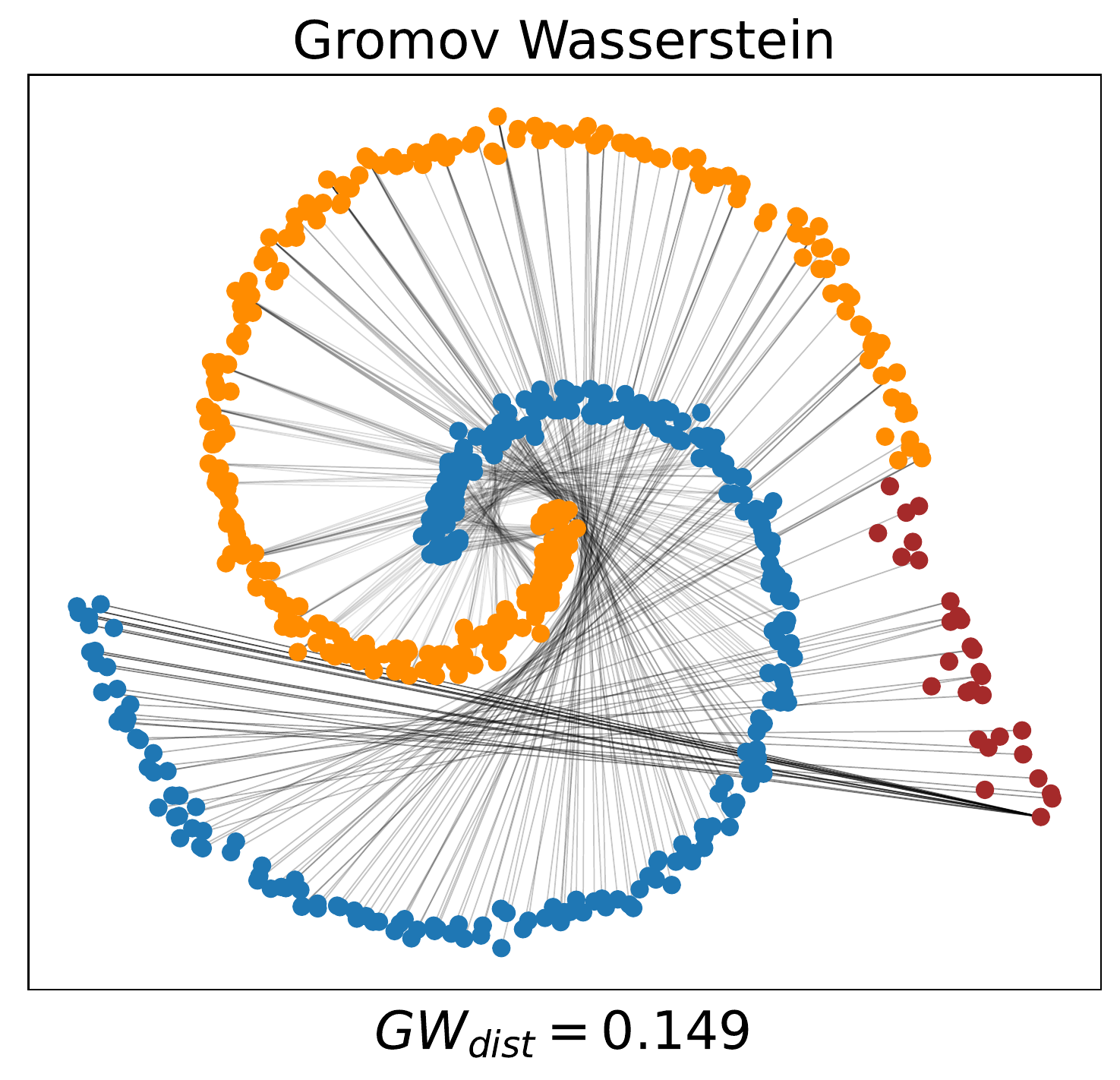}} 
    \subfigure[Dataset with outliers]{\includegraphics[width=0.3\textwidth]{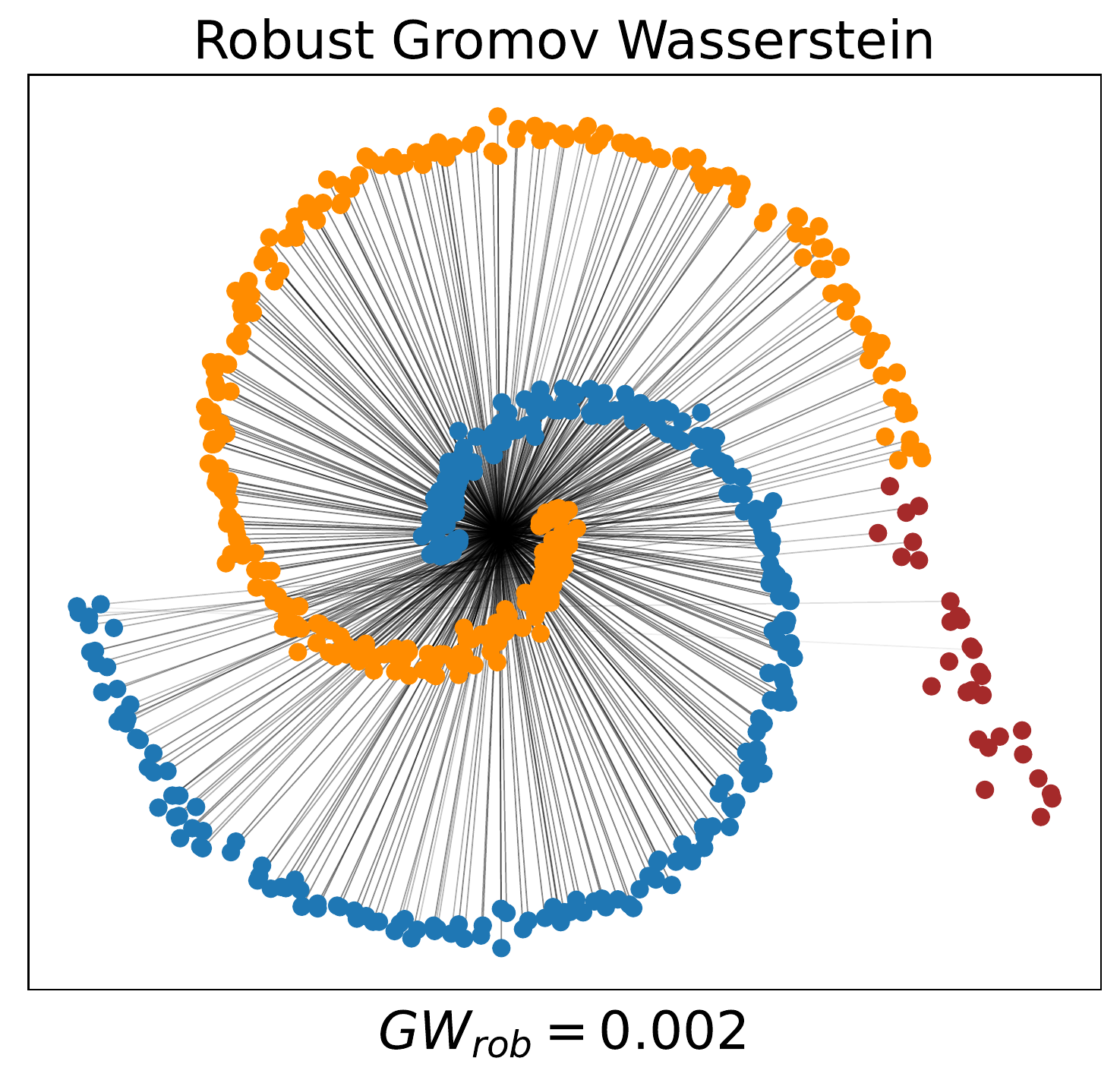}}
	\caption{Visualization of Gromov-Wasserstein couplings between two shapes, with the source in blue and the target in orange.  In (a), the GW coupling without outliers is shown. In (b), the coupling with 10\% outliers added to the target distribution is depicted. The sensitivity of GW to outliers is evident from the plot. In (c), we present the coupling generated by our proposed RGW formulation, which effectively disregards outliers and closely approximates the true GW distance.}
\label{fig:intro-example}
\vspace{-5mm}
\end{figure}

To overcome the above issue, some recent works are trying to relax the marginal constraints of GW distance. \cite{rodola2012game} introduces a $L^1$ relaxation of mass conservation of the GW distance. However, this reformulation replaces the strict marginal constraint that the transport plan should be a joint distribution with marginals as specific distributions by the constraint that only requires the transport plan to be a joint distribution, which can easily lead to over-relaxation. 
On another front, \cite{chapel2020partial} propose a so-called partial GW distance (PGW), which only transports a fraction of mass from source distribution to target distribution. The formulation of PGW is limited to facilitating mass destruction, which restricts its ability to handle situations where outliers exist predominantly on one side. A formulation that allows both mass destruction and creation 
is proposed in \citep{sejourne2021unbalanced} called unbalanced GW (UGW). The UGW relaxes the marginal constraint via adding the quadratic $\varphi$-divergence as the penalty function in the objective function and extends GW distance to compare metric spaces equipped with arbitrary positive measures.  Additionally, \cite{tran2023unbalanced} proved that UGW is robust to outliers and can effectively remove the mass of outliers with high transportation costs. However, UGW is sensitive to the penalty parameter as it balances the reduction of outlier impact and the control of marginal distortion in the transport plan. On the computational side, an alternate Sinkhorn minimization method is proposed to calculate the entropy-regularized UGW. Note that the algorithm  does not exactly solve UGW but approximates the lower bound of the entropic regularized UGW instead. From a statistical viewpoint, these works do not establish a direct link between the reformulated GW distance and the GW distance in terms of uncontaminated levels.

In this work, we propose the robust Gromov-Wasserstein (RGW) to estimate the GW distance robustly when dealing with outliers. To achieve this, RGW simultaneously optimizes the transport plan and selects the best marginal distribution from a neighborhood of the given marginal distributions, avoiding contaminated distributions. Perturbed marginal distributions help to re-weight the samples and lower the weight assigned to the outliers. The introduction of relaxed distributions to handle outliers draws inspiration from robust OT techniques~\cite{balaji2020robust,mukherjee2021outlier,le2021robust}. Unlike robust OT, which is convex, RGW is non-convex, posing algorithmic challenges. This idea is also closely related to optimistic modelings of distribution ambiguity in data-driven optimization, e.g., upper confidence bound in the multi-armed bandit problem and reinforcement learning~\citep{bubeck2012regret,munos2014bandits,agarwal2020optimistic}, data-driven distributionally robust decision-making with outliers \citep{jiang2021dfo, cao2021contextual}, etc. 

Moreover, inspired by UGW, RGW relaxes the marginal constraint via adding the Kullback-Leibler (KL) divergence between the marginals of the transport plan and the perturbed distributions as the penalty function in the objective function to lessen the impact of the outliers further. Instead of utilizing the quadratic KL divergence as employed in unbalanced GW, we opt for KL divergence due to its computational advantages. It allows for convex subproblems with closed-form solutions, as opposed to the linearization required for non-convex quadratic KL divergence, which could be challenging algorithmically. Furthermore, we leverage the convexity of KL divergence to establish the statistical properties of RGW, leading to an upper bound by the true GW distance with explicit control through marginal relaxation parameters and marginal penalty parameters. This statistical advantage is disrupted by the non-convex nature of quadratic KL divergence. Additionally, KL divergence aligns with our goal of outlier elimination and is less sensitive to outliers compared to quadratic KL divergence, which is more outlier-sensitive due to its quadratic nature. Overall, RGW combines the introduction of perturbed marginal distributions with the relaxation of hard marginal constraints to achieve greater flexibility, allowing control over marginal distortion through marginal penalty parameters and reduction of outlier impact using marginal relaxation parameters.

To realize the modeling benefits of RGW,  we further propose an efficient algorithm based on the Bregman proximal alternating linearized minimization (BPALM) method. The updates in each iteration of BPALM can be computed in a highly efficient manner. On the theoretical side, we prove that the BPALM algorithm converges to a critical point of the RGW problem. Empirically, we demonstrate the effectiveness of RGW and the proposed BPALM algorithm through extensive numerical experiments on subgraph alignment and partial shape correspondence tasks. The results reveal that RGW surpasses both the balanced GW-based method and the reformulations of GW, including PGW and UGW.

\textbf{Our Contributions} \ We summarize our main contributions as follows:
\begin{itemize}[leftmargin=*,itemsep=2pt,topsep=0pt,parsep=0pt]
\item We develop a new robust model called RGW to alleviate the impact of outliers on the GW distance. The key insight is to simultaneously optimize the transport plan and perturb marginal distributions in the most efficient way. 
\item On the statistical side, we demonstrate that the robust Gromov-Wasserstein is bounded above by the true GW distance under the Huber $\epsilon$-contamination model. 
\item On the computational side, we propose an efficient algorithm for solving RGW using the BPALM method and prove that the algorithm converges to a critical point of the RGW problem. 
\item Empirical results on subgraph alignment and partial shape correspondence tasks demonstrate the effectiveness of RGW. This is the first successful attempt to apply  GW-based methods to partial shape correspondence, a challenging problem pointed out in \citep{solomon2016entropic}.
\end{itemize}

\section{Problem Formulation}
\label{sec:formulation}
In this section, we review the definition of Gromov-Wasserstein distance and formally formulate the robust Gromov-Wasserstein. Following that, we discuss the statistical properties of the proposed robust Gromov-Wasserstein model under Huber's contamination model. 

For the rest of the paper, we will use the following notation. Let $(X, d_X)$ be a complete separable metric space and denote the finite, positive Borel measure on $X$ by $\mathcal{M}_{+}(X)$. Let $\mathcal{P}(X) \subset \mathcal{M}_{+}(X)$ denotes the space of Borel probability measures on $X$. We use $\Delta^n$ to denote the simplex in $\mathbb{R}^n$. We use $\mathbf{1}_n$ and $\mathbf{1}_{n\times m}$ to denote the $n$-dimensional all-one vector and $n\times m$ all-one matrix. We use $\mathcal{S}^n$ to denote the set of $n \times n$ symmetric matrice. The indicator function of set $C$ is denoted as $\mathbb{I}_C(\cdot)$.

\subsection{Robust Gromov-Wasserstein}
The Gromov-Wasserstein (GW) distance aims at matching distributions defined in different metric spaces. It is defined as follows:
\begin{definition}[Gromov-Wasserstein]
    Suppose that we are given two unregistered complete separable metric spaces $(X,d_X)$, $(Y,d_Y)$ accompanied with Borel probability measures $\mu,\nu$ respectively. The GW distance between $\mu$ and $\nu$ is defined as 
    \begin{equation*}
        \inf_{\pi \in \Pi(\mu,\nu)} \iint |d_X(x,x') - d_Y(y,y')|^2 d\pi(x,y) d\pi(x',y'),
    \end{equation*}
    where $\Pi(\mu,\nu)$ is the set of all probability measures on $X\times Y$ with $\mu$ and $\nu$ as marginals.
\end{definition}

As shown in the definition, the sensitivity to outliers of Gromov-Wasserstein distance is due to its hard constraints on marginal distributions. This suggests relaxing the marginal constraints such that the weight assigned to the outliers by the transport plan can be small. To do it, we invoke the Kullback-Leibler divergence, defined as $d_{\textnormal{\bf KL}}(\alpha, \mu) = \int_{X} \alpha(x) \log \left(\alpha(x)/\mu(x)\right) dx$, to soften the constraint on marginal distributions.  We also introduce an optimistically distributionally robust mechanism to perturb the marginal distributions and reduce the weight assigned to outliers. Further details on this mechanism will be discussed later.

\begin{definition}[Robust Gromov-Wasserstein]
    Suppose that we are given two unregistered complete separable metric spaces $(X,d_X)$, $(Y,d_Y)$ accompanied with Borel probability measures $\mu,\nu$ respectively. The Robust GW between $\mu$ and $\nu$ is defined as 
    \begin{equation}
    \begin{aligned}  
    \textnormal{GW}^{\textnormal{rob}}_{\rho_1,\rho_2}(\mu,\nu) \coloneqq &\min_{ \alpha \in \mathcal{P}(X), \ \beta \in \mathcal{P}(Y)}~F(\alpha,\beta) \\
    &\qquad\quad\text{s.t.}~d_{\textnormal{\bf KL}}(\mu,\alpha) \leq \rho_1, d_{\textnormal{\bf KL}}(\nu,\beta) \leq \rho_2,
    \end{aligned}
    \label{eq:robust_gw}
    \vspace{-2mm}
\end{equation}
where $ F(\alpha,\beta) = $
\begin{equation*}
     \inf \limits_{\pi \in \mathcal{M}^{+}(X\times Y)} \iint  |d_X(x,x')-d_Y(y,y'))|^2 d \pi(x, y)d \pi\left(x', y'\right) 
    +  \tau_1 d_{\textnormal{\bf KL}}(\pi_1, \alpha) + \tau_2 d_{\textnormal{\bf KL}}(\pi_2,\beta),
\end{equation*}
and $(\pi_1,\pi_2)$ are two marginals of the joint distribution $\pi$, defined by $\pi_1(A) = \pi(A\times Y)$ for any Borel set $A \subset X$ and $\pi_2(B) = \pi(X\times B)$ for any Borel set $B \subset Y$.
\end{definition}

The main idea of our formulation is to optimize the transport plan and perturbed distribution variables in the ambiguity set of the observed marginal distributions jointly. This formulation aims to find the perturbed distributions that approximate the clean distributions and compute the transport plan based on the perturbed distributions. However, incorporating the constraints of equal marginals between the transport plan $\pi$ and the perturbed distributions $\alpha$ and $\beta$ directly poses challenges in developing an algorithm due to potential non-smoothness issues. Inspired by \citep{sejourne2021unbalanced}, we address this challenge by relaxing these marginal constraints and incorporating the KL divergence terms, denoted as $d_{\textnormal{\bf KL}}(\pi_1,\alpha)$ and $d_{\textnormal{\bf KL}}(\pi_2,\beta)$, into the objective function as penalty functions. Different from \citep{sejourne2021unbalanced}, we use KL divergence instead of quadratic KL divergence due to its joint convexity, which is more amenable to algorithm development, as quadratic KL divergence is typically non-convex. Besides, transforming the hard marginal constraints into penalty functions can further lessen the impact of outliers on the transport plan.

Our new formulation extends the balanced GW distance and can recover it by choosing $\rho_1=\rho_2=0$ and letting $\tau_1$ and $\tau_2$ tend to infinity. When properly chosen, $\rho_1$ and $\rho_2$ can encompass the clean distributions within the ambiguity sets. In this scenario, the relaxed reformulation closely approximates the original GW distance in a certain manner. Building on this concept, we prove that RGW can serve as a robust approximation of the GW distance without outliers, given some mild assumptions on the outliers.

\subsection{Robustness Guarantees}
\label{subsec:robust}
Robust Gromov-Wasserstein aims at mitigating the sensitivity of the GW distance to outliers, which can result in large inaccuracies if the outliers are given the same weight as other samples in the objective function. Specifically, RGW is designed to address the issue of the GW distance exploding as the distance between the clean samples and the outliers goes to infinity. In general, even a small number of outliers can cause the GW distance to change dramatically when added to the marginal distributions. To formalize this, consider the Huber $\epsilon$-contamination model popularized in robust statistics \citep{kassam1976asymptotically,chen2018robust,chen2022online}. 
In that model, a base measure $\mu_c$ is contaminated by an outlier distribution $\mu_a$ to obtain a contaminated measure $\mu$,
\begin{equation}
    \mu = (1-\epsilon) \mu_c + \epsilon \mu_a.
    \label{def:huber}
\end{equation}
Under this model, data are drawn from $\mu$ defined in \eqref{def:huber}.

Under the assumption of the Huber $\epsilon$-contamination model, it can be demonstrated that by selecting suitable values of $\rho_1$ and $\rho_2$, the robust Gromov-Wasserstein distance ensures that outliers are unable to substantially inflate the transportation distance. For robust Gromov-Wasserstein, we have the following bound:

\begin{theorem}
Let $\mu$ and $\nu$ be two distributions corrupted by fractions $\epsilon_1$ and $\epsilon_2$ of outliers, respectively. Specifically, $\mu$ is defined as $(1-\epsilon_1)\mu_c + \epsilon_1 \mu_a$, and $\nu$ is defined as $(1-\epsilon_2)\nu_c + \epsilon_2 \nu_a$, where $\mu_c$ and $\nu_c$ represent the clean distributions, and $\mu_a$ and $\nu_a$ represent the outlier distributions. Then,
\begin{align*}
    \textnormal{GW}^{\textnormal{rob}}_{\rho_1,\rho_2} (\mu,\nu)\leq & \ \textnormal{GW}(\mu_c, \nu_c) + 
      \max \left( 0, \epsilon_1- \frac{\rho_1}{d_{\textnormal{\bf KL}}(\mu_a, \mu_c)}\right)  \tau_1 d_{\textnormal{\bf KL}}(\mu_c, \mu_a) \\
      &+ \max \left( 0, \epsilon_2- \frac{\rho_2}{d_{\textnormal{\bf KL}}(\nu_a, \nu_c)}\right)  \tau_2 d_{\textnormal{\bf KL}}(\nu_c, \nu_a).
\end{align*}
\label{thm: rgw_upperbound}
\vspace{-2mm}
\end{theorem}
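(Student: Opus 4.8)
The plan is to exploit that $\textnormal{GW}^{\textnormal{rob}}_{\rho_1,\rho_2}(\mu,\nu)$ is defined by a minimization over $(\alpha,\beta)$ with an inner infimum over $\pi$, so that any feasible triple $(\alpha,\beta,\pi)$ immediately yields an upper bound on the value. First I would construct such a triple explicitly from the clean distributions together with the optimal balanced-GW coupling, and then control each of the three terms of $F(\alpha,\beta)$ separately.

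Concretely, set $\lambda_1 := \max\bigl(0,\ \epsilon_1 - \rho_1/d_{\textnormal{\bf KL}}(\mu_a,\mu_c)\bigr)$ and choose the perturbed marginal $\alpha := (1-\lambda_1)\mu_c + \lambda_1\mu_a$, and analogously $\beta := (1-\lambda_2)\nu_c + \lambda_2\nu_a$ with $\lambda_2$ built from the $\nu$-quantities. Since $0 \le \lambda_1 \le \epsilon_1$, each of $\alpha,\beta$ is a genuine probability measure. For the transport plan I take $\pi := \pi^\star$, an optimal coupling attaining $\textnormal{GW}(\mu_c,\nu_c)$; its marginals are $\pi_1=\mu_c$ and $\pi_2=\nu_c$, so the quadratic-distortion double integral in $F$ equals exactly $\textnormal{GW}(\mu_c,\nu_c)$, while the two penalties collapse to $\tau_1 d_{\textnormal{\bf KL}}(\mu_c,\alpha)$ and $\tau_2 d_{\textnormal{\bf KL}}(\nu_c,\beta)$.

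The workhorse throughout is joint convexity of the KL divergence. To bound the penalty I write $\mu_c = (1-\lambda_1)\mu_c + \lambda_1\mu_c$ and apply joint convexity against $\alpha$ with common weights $(1-\lambda_1,\lambda_1)$, obtaining $d_{\textnormal{\bf KL}}(\mu_c,\alpha) \le \lambda_1 d_{\textnormal{\bf KL}}(\mu_c,\mu_a)$; substituting the definition of $\lambda_1$ recovers the stated $\mu$-contribution, and symmetrically for $\beta$. It then remains to check feasibility $d_{\textnormal{\bf KL}}(\mu,\alpha)\le\rho_1$. In the regime $\lambda_1=0$ (so $\alpha=\mu_c$ and $\rho_1 \ge \epsilon_1 d_{\textnormal{\bf KL}}(\mu_a,\mu_c)$), joint convexity with weights $(1-\epsilon_1,\epsilon_1)$ gives $d_{\textnormal{\bf KL}}(\mu,\mu_c)\le \epsilon_1 d_{\textnormal{\bf KL}}(\mu_a,\mu_c)\le\rho_1$, the last step being exactly the defining condition of this case.

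I expect the one delicate step to be feasibility when $\lambda_1>0$. The trick is to split both measures into three-component mixtures sharing the weights $(1-\epsilon_1,\ \epsilon_1-\lambda_1,\ \lambda_1)$: writing $\mu=(1-\epsilon_1)\mu_c+(\epsilon_1-\lambda_1)\mu_a+\lambda_1\mu_a$ against $\alpha=(1-\epsilon_1)\mu_c+(\epsilon_1-\lambda_1)\mu_c+\lambda_1\mu_a$, joint convexity annihilates the first and third terms and leaves $d_{\textnormal{\bf KL}}(\mu,\alpha)\le(\epsilon_1-\lambda_1)d_{\textnormal{\bf KL}}(\mu_a,\mu_c)=\rho_1$, where $\epsilon_1-\lambda_1=\rho_1/d_{\textnormal{\bf KL}}(\mu_a,\mu_c)$. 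The same argument handles $\beta$. Chaining $\textnormal{GW}^{\textnormal{rob}}_{\rho_1,\rho_2}(\mu,\nu)\le F(\alpha,\beta)\le \textnormal{GW}(\mu_c,\nu_c)+\tau_1 d_{\textnormal{\bf KL}}(\mu_c,\alpha)+\tau_2 d_{\textnormal{\bf KL}}(\nu_c,\beta)$ with the two penalty bounds yields the claim; the only mild hypothesis used is $d_{\textnormal{\bf KL}}(\mu_a,\mu_c)>0$ (and its $\nu$-analogue), ensuring the defining fractions are well posed.
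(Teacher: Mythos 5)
Your proposal is correct and follows essentially the same route as the paper: bound $\textnormal{GW}^{\textnormal{rob}}_{\rho_1,\rho_2}$ by plugging in the optimal clean coupling $\pi_c$ together with an explicit mixture for $\alpha$ (your $(1-\lambda_1)\mu_c+\lambda_1\mu_a$ is exactly the paper's $(1-\gamma)\mu+\gamma\mu_c$ at the extremal $\gamma=\min\bigl(\rho_1/(\epsilon_1 d_{\textnormal{\bf KL}}(\mu_a,\mu_c)),1\bigr)$), and control both feasibility and the penalty via joint convexity of the KL divergence. The only cosmetic difference is that you verify feasibility in one step with a three-component mixture where the paper chains two convexity inequalities.
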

In Appendix \ref{appendex:rgw_upperbound}, we provide a proof that constructs a feasible transport plan and relaxed marginal distributions. By relaxing the strict marginal constraints, we can find a feasible transport plan that closely approximates the transport plan between the clean distributions and obtain relaxed marginal distributions that approximate the clean distributions.

The derived bound indicates that robust Gromov-Wasserstein provides a provably robust estimate under the Huber $\epsilon$-contamination model. If the fraction of outliers is known, the upper bound for the robust GW is determined by the true Gromov-Wasserstein distance, along with additional terms that account for the KL divergence between the clean distribution and the outlier distribution for both $\mu$ and $\nu$. The impact of this factor is determined by the extent of relaxation in the marginal distributions $\rho_1$ and $\rho_2$. By carefully choosing $\rho_1 = \epsilon_1 d_{\textnormal{\bf KL}}(\mu_a,\mu_c)$ and $\rho_2 = \epsilon_2 d_{\textnormal{\bf KL}}(\nu_a,\nu_c)$, we can tighten the upper bound on the robust GW value, while still keeping it below the original GW distance (excluding outliers). Importantly, substituting these values of $\rho_1$ and $\rho_2$ yields $\textnormal{GW}^{\textnormal{rob}}_{\rho_1,\rho_2} (\mu,\nu) \leq \textnormal{GW}(\mu_c,\nu_c)$, indicating that the robust GW between the contaminated distribution $\mu$ and $\nu$ is upper bounded by the original GW distance between the clean distribution $\mu_c$ and $\nu_c$.

\begin{remark}
\label{re:ugw-upperbound}
The following inequality for UGW under the Huber $\epsilon$-contamination model can be derived using the same techniques as in Theorem \ref{thm: rgw_upperbound}:
\[\mathrm{UGW}(\mu,\nu) \leq \mathrm{GW}(\mu_c,\nu_c) + \tau_1 d_{\textnormal{\bf KL}}^\otimes(\mu_c,\mu) + \tau_2 d_{\textnormal{\bf KL}}^\otimes(\nu_c,\nu).\]
We observe that the terms $d_{\textnormal{\bf KL}}^\otimes(\mu_c,\mu)$ and $d_{\textnormal{\bf KL}}^\otimes(\nu_c,\nu)$ cannot be canceled out unless $\tau_1$ and $\tau_2$ are set to zero, resulting in over-relaxation. However, RGW allows us to control the error terms $d_{\textnormal{\bf KL}}(\mu_c,\mu_a)$ and $d_{\textnormal{\bf KL}}(\nu_c,\nu_a)$ through the marginal relaxation parameters $\rho_1$ and $\rho_2$.
\end{remark}

\section{Proposed Algorithm}
\label{sec:alg}
\subsection{Problem Setup}
To start with our algorithmic developments, we consider the discrete case for simplicity and practicality,  where $\mu$ and $\nu$ are two empirical distributions, i.e., $\mu = \sum_{i=1}^n \mu_i \delta_{x_i}$ and  $\nu = \sum_{j=1}^m \nu_j \delta_{y_j}$. Denote $D \in \mathcal{S}^n$, $D_{i k} = d_X(x_i,x_k)$ and $\bar{D} \in \mathcal{S}^m$ and $\bar{D}_{j l} = d_Y(y_j,y_l)$. We construct a 4-way tensor as follows:
\[\mathcal{L}(D, \bar{D}) \coloneqq \left(\left|d_X\left(x_{i}, x_{k}\right)-d_Y\left(y_{j}, y_{l}\right)\right|^{2}\right)_{i,j,k,l}.\]
We define the tensor-matrix multiplication as
\begin{equation*}
\left(\mathcal{L} \otimes T \right)_{i j } \coloneqq \left(\sum_{k, \ell} \mathcal{L}_{i, j, k, \ell} T_{k, \ell}\right)_{i, j}.
\end{equation*}
Then, the robust GW admits the following reformulation:
\begin{equation}
    \begin{aligned}
        \min_{\pi,\alpha,\beta}~& \langle \mathcal{L}(D,\bar{D})\otimes \pi, \pi\rangle + \tau_1 d_{\textnormal{\bf KL}}(\pi_1, \alpha) + \tau_2 d_{\textnormal{\bf KL}}(\pi_2, \beta)\\
    \text{s.t.}~& d_{\textnormal{\bf KL}}(\mu, \alpha) \leq \rho_1,d_{\textnormal{\bf KL}}(\nu, \beta) \leq \rho_2,\\
     & \alpha \in \Delta^n, \beta \in \Delta^m, \pi \geq 0.
    \end{aligned}
    \label{eq:robust_gw_discrete}
\end{equation}
Here, $\pi_1 = \pi \mathbf{1}_m$ and $\pi_2 = \pi^T \mathbf{1}_n$.

\subsection{Bregman Proximal Alternating Linearized Minimization (BPALM) Method}
As Problem \eqref{eq:robust_gw_discrete} is non-convex and involves three variables, we employ BPALM \citep{bolte2014proximal,ahookhosh2021multi} to solve it. By choosing the KL divergence as Bregman distance, the updates of this algorithm are given by:
\begin{equation}
        \pi^{k+1} = \mathop{\arg\min}_{\pi \geq 0} \{\langle \mathcal{L}(D,\bar{D})\otimes \pi^k, \pi \rangle + \tau_1 d_{\textnormal{\bf KL}}(\pi_1, \alpha^k) + \tau_2 d_{\textnormal{\bf KL}}(\pi_2, \beta^k) + \frac{1}{t_k} d_{\textnormal{\bf KL}}(\pi, \pi^{k})\},
    \label{eq:pi-update}
\end{equation}
\begin{equation}
    \begin{split}
        \alpha^{k+1} = \mathop{\arg\min}_{\substack{\alpha \in \Delta^n \\ d_{\textnormal{\bf KL}}(\mu, \alpha) \leq \rho_1}}\{d_{\textnormal{\bf KL}}(\pi^{k+1}_1, \alpha) + \frac{1}{c_k}d_{\textnormal{\bf KL}}(\alpha^k, \alpha)\},
    \end{split}
    \label{eq:alpha-update}
\end{equation}
\begin{equation}
    \beta^{k+1} = \mathop{\arg\min}_{\substack{\beta \in \Delta^m\\ d_{\textnormal{\bf KL}}(\nu, \beta) \leq \rho_2}} \{d_{\textnormal{\bf KL}}(\pi_2^{k+1}, \beta) + \frac{1}{r_k} d_{\textnormal{\bf KL}}(\beta^k, \beta) \}.
    \label{eq:beta-update}
\end{equation}
Here, $t_k$, $c_k$, and $r_k$ are stepsizes in BPALM.

In our algorithm updates, we employ distinct proximal operators for $\pi$ and $\alpha$ (and $\beta$). The use of $d_{\textnormal{\bf KL}}(\pi,\pi^k)$ in $\pi$-subproblem~\eqref{eq:pi-update} allows for the application of the Sinkhorn algorithm, while the introduction of $d_{\textnormal{\bf KL}}(\alpha^k,\alpha)$ in $\alpha$-subproblem~\eqref{eq:alpha-update} facilitates a closed-form solution, which we will detail in the following part. 

To solve the $\pi$-subproblem, we can utilize the Sinkhorn algorithm for the entropic regularized unbalanced optimal transport problem. This algorithm, which has been previously introduced in \citep{chizat2018scaling,pham2020unbalanced}, is well-suited for our needs. As for the $\alpha$-subproblem, we consider the case where $\rho_1$ is strictly larger than 0. Otherwise, when $\rho_1 = 0$, $\alpha$ should simply equal $\mu$, making the subproblem unnecessary. To solve the $\alpha$-subproblem, we attempt to find the optimal dual multiplier $w^*$. Specifically, consider the problem: 
\begin{equation}
     \min_{\alpha \in \Delta^n} d_{\textnormal{\bf KL}}(\pi^{k+1}_1, \alpha) + \frac{1}{c_k} d_{\textnormal{\bf KL}}(\alpha^k, \alpha) + w(d_{\textnormal{\bf KL}}(\mu, \alpha)-\rho_1).
    \label{eq:alpha-sub}
\end{equation}
Let $\hat{\alpha}(w)$ represent the optimal solution to \eqref{eq:alpha-sub}, and we define the function $p: \mathbb{R}_+ \rightarrow \mathbb{R}$ by $p(w) = d_{\textnormal{\bf KL}}(\mu, \hat{\alpha}(w)) - \rho_1$. We prove the convexity, differentiability, and monotonicity of $p$, which are crucial for developing an efficient algorithm for \eqref{eq:alpha-update} later.

\begin{proposition} 
\label{prop:alpha-dual}
Problem $\eqref{eq:alpha-sub}$ has a closed-form solution 
\[\hat{\alpha}(w)  = \frac{\pi^{k+1} \mathbf{1}_m + \frac{1}{c_k} \alpha^k +w\mu}{\sum_{ij} \pi_{i j}^{k+1} + \frac{1}{c_k} + w }.\]
If $w$ satisfies (i) $w=0$ and $p(w) \leq 0$, or (ii) $w > 0$, $p(w) = 0$, then $\hat{\alpha}(w)$ is the optimal solution to the $\alpha$-subproblem \eqref{eq:alpha-update}. Moreover, $p(\cdot)$ is convex, twice differentiable, and monotonically non-increasing on $\mathbb{R}_+$.
\end{proposition}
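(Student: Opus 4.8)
The plan is to treat \eqref{eq:alpha-sub} as the Lagrangian relaxation of the constrained $\alpha$-subproblem \eqref{eq:alpha-update}, verify optimality through the KKT framework, and then read off the analytic properties of $p$ by differentiating an explicit expression.

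First I would derive the closed form. Fix $w\ge 0$ and drop the constant $-w\rho_1$. Since each of the three $d_{\textnormal{\bf KL}}$ terms contains $\alpha$ only in its second argument, and since on the simplex the linear mass terms are constant, the objective of \eqref{eq:alpha-sub} equals $-\sum_i s_i\log\alpha_i$ up to terms independent of $\alpha$, where $s_i=(\pi^{k+1}\mathbf{1}_m)_i+\tfrac{1}{c_k}\alpha_i^k+w\mu_i$. This function is strictly convex and tends to $+\infty$ as any $\alpha_i\to 0^+$ with $s_i>0$, so the nonnegativity constraints are inactive; minimizing over $\{\sum_i\alpha_i=1\}$ with a single Lagrange multiplier gives $\hat{\alpha}_i(w)\propto s_i$. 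Normalizing with $\sum_i s_i=\sum_{ij}\pi^{k+1}_{ij}+\tfrac{1}{c_k}+w$ (using $\sum_i\alpha_i^k=\sum_i\mu_i=1$) yields exactly the stated formula.

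Next, for optimality I would observe that \eqref{eq:alpha-update} is a convex program: the objective is jointly convex in $\alpha$, the constraint $d_{\textnormal{\bf KL}}(\mu,\alpha)\le\rho_1$ is convex by joint convexity of KL, and the feasible set is convex. Because $\rho_1>0$, the point $\alpha=\mu$ is strictly feasible, so Slater's condition holds and KKT is both necessary and sufficient. Then \eqref{eq:alpha-sub} is precisely the Lagrangian with multiplier $w$, and $\hat{\alpha}(w)$ satisfies stationarity by construction; the two cases (i) $w=0,\ p(w)\le 0$ and (ii) $w>0,\ p(w)=0$ are exactly dual feasibility together with primal feasibility and complementary slackness, since $p(w)=d_{\textnormal{\bf KL}}(\mu,\hat{\alpha}(w))-\rho_1$. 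Hence $\hat{\alpha}(w)$ solves \eqref{eq:alpha-update}.

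Finally, for the regularity of $p$ I would substitute the closed form. Writing $A_i=(\pi^{k+1}\mathbf{1}_m)_i+\tfrac{1}{c_k}\alpha_i^k$ and $B=\sum_i A_i$, one gets $p(w)=\mathrm{const}-\sum_i\mu_i\log(A_i+w\mu_i)+\log(B+w)-\rho_1$, which is twice differentiable on $\mathbb{R}_+$ since every denominator $A_i+w\mu_i$ (for $\mu_i>0$) and $B+w$ stays positive along the iterates. Using $A_i+w\mu_i=\hat{\alpha}_i(w)(B+w)$ and $\sum_i\hat{\alpha}_i(w)=1$, the derivatives simplify to $p'(w)=\tfrac{1}{B+w}\bigl(1-\sum_i\mu_i^2/\hat{\alpha}_i(w)\bigr)$ and $p''(w)=\tfrac{1}{(B+w)^2}\bigl(\sum_i\mu_i^3/\hat{\alpha}_i(w)^2-1\bigr)$. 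Monotonicity reduces to $\sum_i\mu_i^2/\hat{\alpha}_i(w)\ge 1$, which is Cauchy--Schwarz (equivalently nonnegativity of $\chi^2$), and convexity reduces to $\sum_i\mu_i^3/\hat{\alpha}_i(w)^2\ge 1$. I expect this last inequality to be the main obstacle: it is Radon's inequality $\sum_i x_i^3/y_i^2\ge(\sum_i x_i)^3/(\sum_i y_i)^2$ applied with $x_i=\mu_i,\ y_i=\hat{\alpha}_i(w)$, which I would prove by H\"older with exponents $3$ and $3/2$ using $\sum_i\mu_i=\sum_i\hat{\alpha}_i(w)=1$. A cleaner alternative route to monotonicity, worth recording, is that $p$ equals the derivative of the concave dual function $w\mapsto\min_\alpha\{\cdots\}$ by Danskin's theorem, hence is automatically non-increasing; but convexity still requires the direct second-derivative estimate above.
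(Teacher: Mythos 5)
Your proposal is correct and follows essentially the same route as the paper's proof: derive the closed form by solving the Lagrangian of the simplex-constrained problem (with nonnegativity inactive), verify optimality for \eqref{eq:alpha-update} via the KKT conditions, and then substitute the closed form into $p$ to compute $p'$ and $p''$ explicitly. The only cosmetic difference is the final step, where you invoke Cauchy--Schwarz and Radon/H\"older to show $\sum_i\mu_i^2/\hat{\alpha}_i(w)\ge 1$ and $\sum_i\mu_i^3/\hat{\alpha}_i(w)^2\ge 1$, while the paper obtains the same two inequalities by applying Jensen's inequality to the convex functions $1/x$ and $1/x^2$.
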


Given Proposition~\ref{prop:alpha-dual}, we begin by verifying $p(0) \leq 0$. If this condition is not met, and given that $p(0) > 0$ while $\lim_{w \rightarrow +\infty} p(w) = -\rho_1 < 0$, it implies that $p$ possesses at least one root within $\mathbb{R}_{+}$. The following proposition provides the framework to seek the root of $p$ by employing Newton's method, with the initialization set at 0.
Hence, the $\alpha$-subproblem can be cast to search a root of $p$ in one dimension, in which case it can be solved efficiently.

\begin{proposition}
    Let $p(\cdot): I \rightarrow \mathbb{R}$ be a convex, twice differentiable, and monotonically non-increasing on the interval $I \subset \mathbb{R}$. Assume that there exists an $\tilde{x}$, $\bar{x} \in I$ such that $p(\tilde{x}) > 0$ and $p(\bar{x}) < 0$. Then $p$ has a unique root on $I$, and the sequence obtained from Newton's method with initial point $x_0 = \tilde{x}$ will converge to the root of $p$.
    \label{prop:convergence_newton}
\end{proposition}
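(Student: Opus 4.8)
The plan is to separate the statement into three parts: existence of a root, uniqueness, and convergence of the Newton iterates $x_{n+1} = x_n - p(x_n)/p'(x_n)$. For existence, since $p$ is twice differentiable it is continuous, and the hypotheses $p(\tilde{x}) > 0 > p(\bar{x})$ together with the intermediate value theorem produce a root $r$ strictly between $\tilde{x}$ and $\bar{x}$; because $I$ is an interval, $r \in I$. Monotonicity also forces $\tilde{x} < \bar{x}$, since otherwise $p(\tilde{x}) \le p(\bar{x})$ would follow.

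For uniqueness I would exploit convexity and monotonicity jointly. Convexity makes $p'$ non-decreasing, while $p$ being non-increasing makes $p' \le 0$; thus $p'$ is a non-decreasing, non-positive function. The zero set of the continuous non-increasing function $p$ is a closed subinterval $[r_-, r_+] \subseteq I$. If $r_- < r_+$, then $p \equiv 0$, hence $p' \equiv 0$, on $(r_-, r_+)$, and since $p'$ is non-decreasing we would have $p' \ge 0$ (hence $p \ge 0$) everywhere to the right of $r_+$, contradicting $p(\bar{x}) < 0$ with $\bar{x} > r_+$. Therefore $r_- = r_+ =: r$ is the unique root. A byproduct I will need is that $p'(x) < 0$ strictly for every $x < r$: for such $x$ we have $p(x) > 0 = p(r)$, so the mean value theorem gives $p'(\xi) < 0$ for some $\xi \in (x, r)$, and $p'(x) \le p'(\xi) < 0$ by monotonicity of $p'$. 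This guarantees the Newton iteration is well defined along the orbit and that each step $-p(x_n)/p'(x_n)$ is positive while $p(x_n) > 0$, so the iterates move rightward toward $r$.

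The crux, and the step I expect to be the main obstacle to phrase cleanly, is the no-overshoot invariant $x_n \le r$ for all $n$. This is exactly where convexity is used: the tangent line $\ell_n(x) = p(x_n) + p'(x_n)(x - x_n)$ lies below the graph of $p$, and $x_{n+1}$ is by definition the root of $\ell_n$, so $p(x_{n+1}) \ge \ell_n(x_{n+1}) = 0$. Since $p$ is non-increasing with unique root $r$ (so $p < 0$ on $(r,\infty)\cap I$), the inequality $p(x_{n+1}) \ge 0$ forces $x_{n+1} \le r$. Combining this with the previous paragraph, an induction starting from $x_0 = \tilde{x} < r$ shows that $\{x_n\}$ is strictly increasing (as long as $p(x_n) > 0$) and bounded above by $r$.

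Finally I would pass to the limit. A monotone bounded sequence converges to some $L \le r$. If some iterate attains $p(x_n) = 0$, then $x_n = r$ and the method has reached the root; otherwise $x_{n+1} - x_n = -p(x_n)/p'(x_n) \to 0$. If $p'(L) \neq 0$, continuity of $p$ and $p'$ forces $p(L) = 0$, so $L = r$ by uniqueness; and if $p'(L) = 0$, then $L \ge r$ by the strict negativity of $p'$ on $(-\infty, r) \cap I$, which together with $L \le r$ again gives $L = r$. Hence the Newton sequence initialized at $\tilde{x}$ converges to the unique root $r$, which completes the proof.
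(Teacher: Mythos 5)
Your proposal is correct and follows essentially the same route as the paper's proof: intermediate value theorem for existence, convexity for uniqueness, the tangent-line argument showing the Newton iterates never overshoot the root and form an increasing sequence bounded above by it, and monotone convergence to pass to the limit. The only cosmetic differences are in how uniqueness is derived (the paper writes one putative root as a convex combination of the other and $\bar{x}$, while you analyze the zero set via monotonicity of $p'$) and in the final limit step (the paper uses boundedness of $p'$ on $[\tilde{x},r]$ to force $p(x_k)\to 0$, while you case-split on whether $p'(L)$ vanishes); both are sound.
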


Since the $\beta$-subproblem shares the same structure as the $\alpha$-subproblem, we can apply this method to search for the optimal solution to the $\beta$-subproblem.

\subsection{Convergence Analysis}
To illustrate the convergence result of BPALM, we consider the compact form for simplicity:
\begin{equation*}
    \min_{\alpha, \beta, \pi}  F(\pi,\alpha,\beta) =  f(\pi) + q(\pi) + g_1(\pi,\alpha) + g_2(\pi, \beta) + h_1(\alpha) + h_2(\beta),
\end{equation*}
where  $f(\pi) = \langle \mathcal{L}(D,\bar{D})\otimes \pi, \pi \rangle$, $q(\pi) = \mathbb{I}_{\{\pi \geq 0\}}(\pi)$, $g_1(\pi,\alpha) = \tau_1 d_{\textnormal{\bf KL}}(\pi\mathbf{1}_m,\alpha)$, $g_2(\pi,\beta) = \tau_2 d_{\textnormal{\bf KL}}(\pi^T \mathbf{1}_n,\beta)$, $h_1(\alpha) = \mathbb{I}_{\{\alpha \in \Delta^n, d_{\textnormal{\bf KL}}(\mu, \alpha) \leq \rho_1\}}(\alpha)$, and $h_2(\beta) = \mathbb{I}_{\{\beta \in \Delta^m , d_{\textnormal{\bf KL}}(\nu, \beta) \leq \rho_2\}}(\beta)$.

The following theorem states that any limit point of the sequence generated by BPALM belongs to the critical point set of problem \eqref{eq:robust_gw_discrete}.

\begin{theorem}[Subsequence Convergence]
    Suppose that in Problem \eqref{eq:robust_gw}, the step size $t_k$ in \eqref{eq:pi-update} satisfies $0 < \underline{t} \leq t_k < \bar{t} \leq \sigma / L_f$ for $k \geq 0$ where $\underline{t}$, $\bar{t}$ are given constants and $L_f$ is the gradient Lipschitz constant of $f$. The step size $c_k$ in \eqref{eq:alpha-update} and $r_k$ in \eqref{eq:beta-update} satisfy $0 < \underline{r} \leq c_k, r_k < \bar{r}$ for $k \geq 0$ where $\underline{r}$, $\bar{r}$ are given constants. 
    Any limit point of the sequence of solutions $\{\pi^k, \alpha^k, \beta^k\}_{k\geq 0}$ belongs to the critical point set $\mathcal{X}$, where $\mathcal{X}$ is defined by
    \begin{equation*}
        \left\{(\pi,\alpha,\beta): 
        \begin{aligned}
            & 0 \in  f(\pi) + \partial q(\pi) + \nabla_\pi g_1(\pi, \alpha) + \nabla_\pi g_2(\pi,\beta), \\
            & 0 \in  \nabla_\alpha g_1(\pi, \alpha) + \partial h_1(\alpha), \\
            & 0 \in  \nabla_\beta g_2(\pi, \beta) + \partial h_2(\beta),
            \\
            & (\pi,\alpha,\beta) \in \mathbb{R}^{n\times m} \times \mathbb{R}^n \times \mathbb{R}^m
        \end{aligned}
         \right\}.
    \end{equation*}
    \label{thm:convergence}
\end{theorem}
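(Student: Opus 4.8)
The plan is to show that Problem~\eqref{eq:robust_gw_discrete} together with the three updates \eqref{eq:pi-update}--\eqref{eq:beta-update} fits the Bregman proximal alternating framework of \citep{bolte2014proximal,ahookhosh2021multi}, and then run the standard \emph{sufficient decrease plus limiting subdifferential} argument. First I would record the structural facts. The objective $F$ is bounded below by $0$, since $f(\pi)=\langle\mathcal{L}(D,\bar D)\otimes\pi,\pi\rangle\ge0$ (as $\mathcal{L}\ge0$ and $\pi\ge0$), every $d_{\textnormal{\bf KL}}$ term is nonnegative, and the indicators $q,h_1,h_2$ are nonnegative. Moreover $f$ is a quadratic form in $\pi$, so $\nabla f$ is linear and globally Lipschitz with the stated constant $L_f$. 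The Bregman kernel is the negative entropy $\phi$, with $\nabla\phi(x)=\log x+\mathbf{1}$, and by Pinsker's inequality it is $\sigma$-strongly convex with respect to $\|\cdot\|_1$ on any bounded subset of the nonnegative orthant, i.e. $d_{\textnormal{\bf KL}}(u,v)\ge\tfrac{\sigma}{2}\|u-v\|_1^2$. Each subproblem minimizes a block-convex function plus such a Bregman proximal term over a closed convex set, so strong convexity guarantees a unique minimizer and the iteration is well defined.

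The core step is a three-block sufficient-decrease inequality. For the $\pi$-block I would combine the descent lemma $f(\pi^{k+1})\le f(\pi^k)+\langle\nabla f(\pi^k),\pi^{k+1}-\pi^k\rangle+\tfrac{L_f}{2}\|\pi^{k+1}-\pi^k\|^2$ with the optimality of $\pi^{k+1}$ in \eqref{eq:pi-update} tested against the feasible competitor $\pi^k$; the rule $t_k\le\bar t\le\sigma/L_f$ makes the Bregman term dominate the quadratic remainder and yields $F(\pi^{k+1},\alpha^k,\beta^k)\le F(\pi^k,\alpha^k,\beta^k)-\gamma_\pi\|\pi^{k+1}-\pi^k\|_1^2$ for some $\gamma_\pi>0$. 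For the $\alpha$- and $\beta$-blocks the objectives in \eqref{eq:alpha-update}--\eqref{eq:beta-update} are already convex in the block variable, so testing optimality of $\alpha^{k+1}$ against the feasible point $\alpha^k$ (which makes the proximal term vanish) gives $F(\pi^{k+1},\alpha^{k+1},\beta^k)+\tfrac{\tau_1}{c_k}d_{\textnormal{\bf KL}}(\alpha^k,\alpha^{k+1})\le F(\pi^{k+1},\alpha^k,\beta^k)$, and Pinsker converts the Bregman term into $\gamma_\alpha\|\alpha^{k+1}-\alpha^k\|_1^2$; the $\beta$-block is identical. Adding the three inequalities and using the lower stepsize bounds $\underline t,\underline r$ gives a monotone decrease of $F$ with a penalty proportional to the squared successive differences.

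Next I would telescope. Summing over $k$ and invoking $F\ge0$ yields $\sum_k\bigl(\|\pi^{k+1}-\pi^k\|_1^2+\|\alpha^{k+1}-\alpha^k\|_1^2+\|\beta^{k+1}-\beta^k\|_1^2\bigr)<\infty$, so the successive differences tend to $0$. For boundedness, $\alpha^k\in\Delta^n$ and $\beta^k\in\Delta^m$ lie in compact simplices, while monotonicity of $F$ bounds $\tau_1 d_{\textnormal{\bf KL}}(\pi_1^k,\alpha^k)$; since $\alpha^k$ is a probability vector and $x\mapsto x\log x$ grows superlinearly, a bounded (unbalanced) KL forces the total mass of $\pi^k$ to stay bounded, so $\{\pi^k\}$ lies in a compact subset of $\{\pi\ge0\}$ and limit points exist. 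Fixing a subsequence converging to $(\pi^\ast,\alpha^\ast,\beta^\ast)$, I would write the first-order optimality inclusions of the three subproblems, e.g. $0\in\nabla f(\pi^k)+\partial q(\pi^{k+1})+\nabla_\pi g_1(\pi^{k+1},\alpha^k)+\nabla_\pi g_2(\pi^{k+1},\beta^k)+\tfrac{1}{t_k}(\nabla\phi(\pi^{k+1})-\nabla\phi(\pi^k))$ and the analogues for $\alpha^{k+1},\beta^{k+1}$, and pass to the limit. The vanishing successive differences give $\pi^{k+1},\pi^k\to\pi^\ast$, continuity of $\nabla f$ and of the partial gradients of $g_1,g_2$ transfers those terms, the Bregman correction terms vanish, and the outer semicontinuity of the normal-cone subdifferentials $\partial q,\partial h_1,\partial h_2$ of the closed convex indicators preserves the inclusions, producing exactly the three conditions defining $\mathcal{X}$.

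The main obstacle is the KL geometry near the boundary of the simplex. The constant $\sigma$ is uniform only on a region bounded away from $0$, and more delicately the correction terms $\tfrac{1}{t_k}(\nabla\phi(\pi^{k+1})-\nabla\phi(\pi^k))=\tfrac{1}{t_k}(\log\pi^{k+1}-\log\pi^k)$ involve logarithms that blow up if components approach $0$, so I cannot conclude they vanish from $\|\pi^{k+1}-\pi^k\|_1\to0$ alone. I would resolve this by showing the iterates remain in a compact subset of the relative interior: the closed-form $\hat\alpha(w)$ of Proposition~\ref{prop:alpha-dual} is a convex combination of strictly positive vectors, and the Sinkhorn solution of the $\pi$-subproblem is entrywise positive, so under the stated bounded stepsizes one obtains a uniform positive lower bound on the components of $\pi^k,\alpha^k,\beta^k$; on that set $\nabla\phi$ is Lipschitz and the limiting argument goes through. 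Establishing this uniform lower bound, together with pinning down the precise descent constant for the $\pi$-block given that \eqref{eq:pi-update} linearizes only the $\langle\mathcal{L}\otimes\pi^k,\pi\rangle$ part of $f$, is where the real work lies; the remainder is the routine Bregman-PALM bookkeeping of \citep{bolte2014proximal,ahookhosh2021multi}.
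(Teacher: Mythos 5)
Your proposal follows essentially the same route as the paper's proof: a three-block sufficient-decrease inequality obtained from the descent lemma for the quadratic $f$ together with the $\sigma$-strong convexity of the entropy kernel, telescoping to get summable (hence vanishing) successive differences, and passing to the limit in the first-order inclusions of the three subproblems. The only substantive deviation is in how boundedness of $\{\pi^k\}$ is obtained — the paper proves $\pi^k \in [0,1]^{n\times m}$ by induction via a unimodality argument on $x\log(x/a)-x+a$, whereas you derive it from coercivity of the unbalanced KL penalty — and you are in fact more explicit than the paper about the need for the iterates to stay uniformly away from the boundary of the nonnegative orthant so that the logarithmic Bregman correction terms genuinely vanish in the limit.
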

For the sake of brevity, we omit the proof. We refer the reader to Appendix \ref{appendix:algo_details} for further details.

\section{Experiment Results}
\label{sec:exp}
In this section, we present comprehensive experimental results to validate the effectiveness of our proposed RGW model and BPALM algorithm in various graph learning tasks, specifically subgraph alignment and partial shape correspondence. Traditionally, balanced GW has been applied successfully in scenarios where the source and target graphs have similar sizes. However, in our approach, we treat the missing part of the target graph as outliers and leverage RGW for improved performance. All simulations are conducted in Python 3.9 on a high-performance computing server running Ubuntu 20.10, equipped with an Intel(R) Xeon(R) Silver 4214R CPU. Our code is available at \href{https://github.com/lmkong020/outlier-robust-GW}{https://github.com/lmkong020/outlier-robust-GW}.

\subsection{Partial Shape Correspondence}
\label{subsec:partial}
In this subsection, we first investigate a toy matching problem in a 2D setting to support and validate our theoretical insights and results presented in Section \ref{sec:formulation}. Figure \ref{fig:2d_toy} (a) illustrates an example where we aim to map a two-dimensional shape without symmetries to a rotated version of the same shape while accounting for outliers in the source domain. Here, we sample 300 points from the source shape and 400 points from the target shape. Additionally, we introduce 50 outliers by randomly adding points from a discrete uniform distribution on $[-3,-2.5] \times [0, 0.5]$ to the source domain. The distance matrices, $D$ and $\bar{D}$ are computed using pairwise Euclidean distances.

Figures \ref{fig:2d_toy} provide visualizations of the coupling matrices and objective values for all the models, highlighting the matching results. In Figure \ref{fig:2d_toy}(c), it is evident that even a small number of outliers has a significant impact on the coupling and leads to an increased estimated GW distance. While unbalanced GW and partial GW attempt to handle outliers to some extent, they fall short in achieving accurate mappings. On the other hand, our robust GW formulation, RGW, effectively disregards outliers and achieves satisfactory performance. Additionally, the objective value of RGW closely approximates the true GW distance without outliers, as indicated by Theorem \ref{thm: rgw_upperbound}, approaching zero.

\begin{figure*}[htbp]
\centering
	\vspace{-1mm}
    \includegraphics[width=0.8\textwidth]{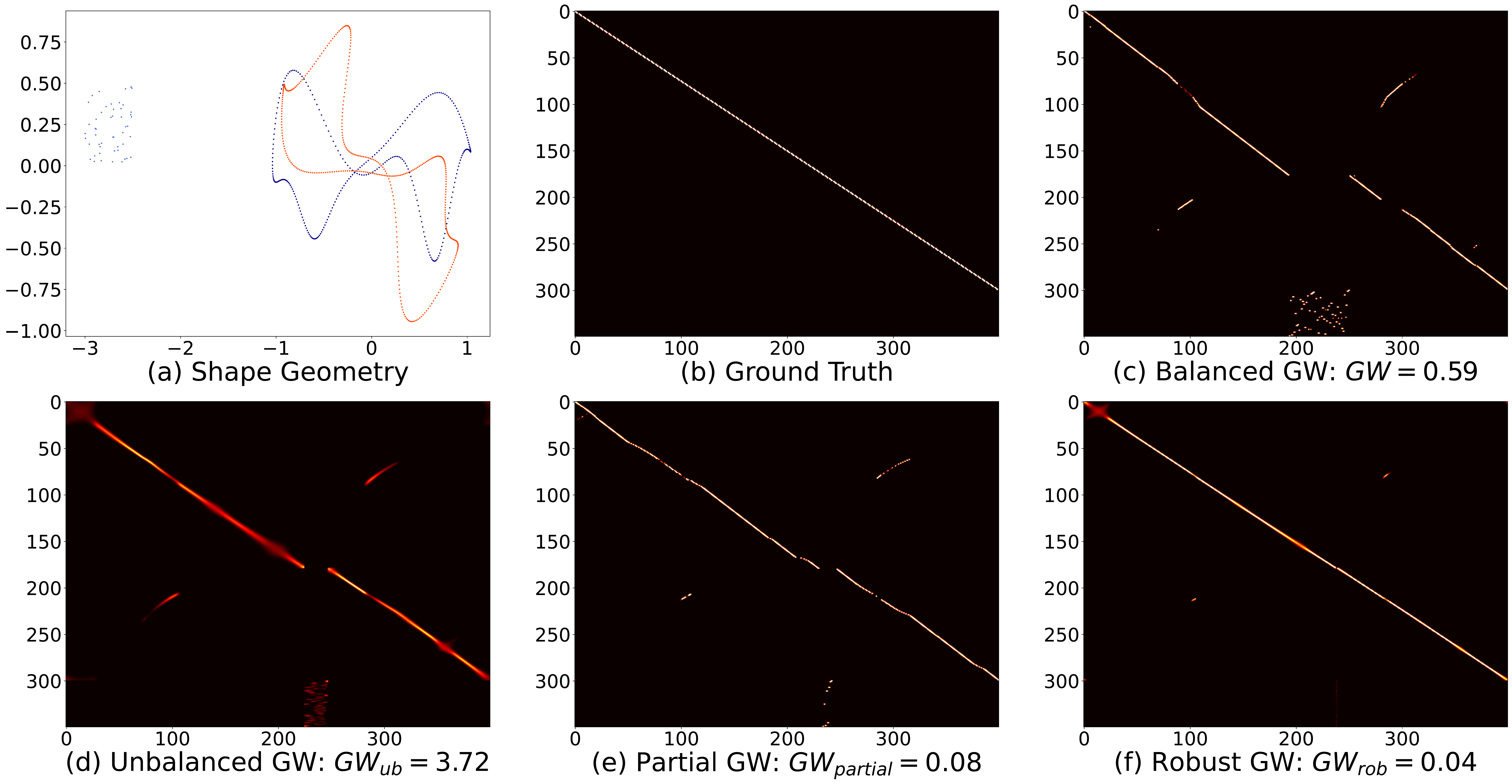}
    \caption{(a): 2D shape geometry of the source and target; (b)-(f): visualization of ground truth and the matching results of balanced GW, unbalanced GW, partial GW, and robust GW.}
    \label{fig:2d_toy}
    \vspace{-3mm}
\end{figure*}

We evaluate the matching performance of RGW on the TOSCA dataset \citep{bronstein2008numerical,rodola2017partial} for partial shape correspondence. For the initial estimation of the transport plan in both RGW and PGW, we utilize the partial functional map method \citep{rodola2017partial}, employing 30 eigenfunctions. This method establishes an initial matching relationship between the source and target shapes. This method establishes an initial matching relationship between the source and target shapes by setting $\pi_{ik}$ to 1 for matching pairs $(i, k)$ and 0 otherwise. The resultant transport plan is scaled by $\|\pi\|_1 = \sum_{ij} \pi_{ij}$.  UGW is not suitable for this large-scale task due to its long execution time. As shown in Figure \ref{fig:partial_shape}, RGW outperforms PGW, and it enhances the performance of the initial point.
\begin{figure}[htbp]
    \vspace{-3mm}
    \centering
    \includegraphics[width=0.9\textwidth]{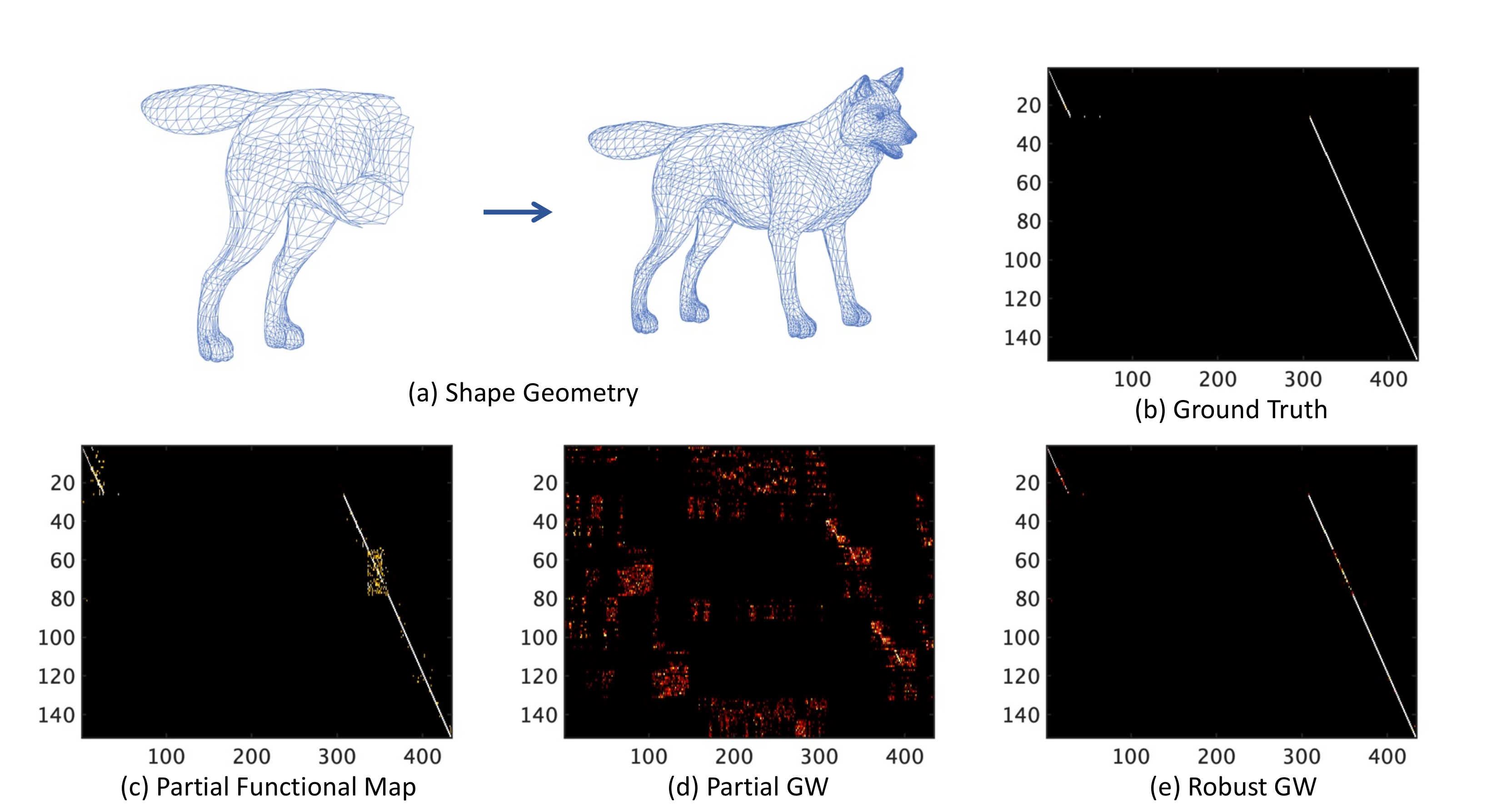}
    \caption{(a): 3D shape geometry of the source and target; (b)-(e): visualization of ground truth, initial point obtained from the partial functional map, and the matching results of PGW and RGW.}
    \label{fig:partial_shape}
    \vspace{-5mm}
\end{figure}

\subsection{Subgraph Alignment}
The subgraph alignment problem, which involves determining the isomorphism between a query graph and a subgraph of a larger target graph, has been extensively studied \citep{cordella2004sub, han2019efficient}. While the restricted quadratic assignment problem is commonly used for graphs of similar sizes, the GW distance provides an optimal probabilistic correspondence that preserves the isometric property. In the subgraph alignment context, the nodes in the target graph, excluding those in the source graph, can be considered outliers, making the RGW model applicable to this task. In our comparison, we evaluate RGW against various methods, including unbalanced GW, partial GW, semi-relaxed GW (srGW) \citep{vincent2022semi}, RGWD \cite{liu2022robust}, and methods for computing balanced GW such as FW \citep{titouan2019optimal}, BPG \citep{xu2019gromov}, SpecGW \citep{chowdhury2021generalized}, eBPG \citep{solomon2016entropic}, and BAPG \citep{li2023a}.

\vspace{-2mm}

\paragraph{Database Statistics} We evaluate the methods on synthetic and real databases. In the synthetic database, we generate target graphs $\mathcal{G}_t$ using Barabasi-Albert models with scales ranging from 100 to 500 nodes. The source graphs $\mathcal{G}_s$ are obtained by sampling connected subgraphs of $\mathcal{G}_t$ with a specified percentage of nodes. This process results in five synthetic graph pairs for each setup, totaling 200 graph pairs. The \textit{Proteins} and \textit{Enzymes} biological graph databases from \citep{chowdhury2021generalized} are also used, following the same subgraph generation routine. For the \textit{Proteins} database, we evaluate the accuracy of matching the overlap between two subgraphs with 90\% overlap and between a subgraph and the entire graph, presented in the "Proteins-2" and "Proteins-1" columns, respectively. We compute the matching accuracy by comparing the predicted correspondence set $\mathcal{S}_{\text{pred}}$ to the ground truth correspondence set $\mathcal{S}_{\text{gt}}$, with accuracy calculated as $\textnormal{Acc}=|\mathcal{S}_{\text{gt}} \cap \mathcal{S}_{\text{pred}}|/|\mathcal{S}_{\text{gt}}|\times 100\%$. In addition, we also evaluate our methods on the \textit{Douban Online-Offline} social network dataset, which consists of online and offline graphs, representing user interactions and presence at social gatherings, respectively. The online graph includes all users from the offline graph, with 1,118 users serving as ground truth alignments. Node locations are used as features in both graphs. In line with previous works \cite{gao2021unsupervised,tang2023robust}, we gauge performance using the Hit@k metric, which calculates the percentage of nodes in set $\mathcal{V}_t$ where the ground truth alignment includes $\mathcal{V}_s$ among the top-k candidates.

\begin{table*}[!htbp]
\vspace{-3mm}
\caption{Comparison of the average matching accuracy (\%) and wall-clock time (seconds) on subgraph alignment of 50\% subgraph on datasets Synthetic, Proteins and Enzymes and Hit@1 and Hit@10 of dataset Douban.}
\vspace{1mm}
\centering
\small
\resizebox{\linewidth}{!}{
\begin{tabular}{c|rr|rr|rr|rr|rr}
\toprule
\multirow{2}{*}{Method} &\multicolumn{2}{c|}{Synthetic} & \multicolumn{2}{c|}{Proteins-1} & \multicolumn{2}{c|}{Proteins-2} & \multicolumn{2}{c|}{Enzymes} & \multicolumn{2}{c}{Douban}  \\
            & Acc & Time & Acc & Time & Acc & Time  & Acc & Time & Hit@1 & Hit@10 \\ \midrule
FW &  2.27 & 18.39 & 16.00 & 27.05 & 26.15 & 60.34 & 15.47 & 9.57 & 17.97 & 51.07 \\
SpecGW & 1.78 &  3.72 & 12.06 & 11.07 & 42.64 &  12.85 & 10.69  & 3.96  & 2.68 & 9.83 \\
eBPG & 3.71&  85.31& 19.88 & 1975.12 & 32.15 &  9645.05 & 21.58 & 1219.81  & 0.08 & 0.53\\
BPG  & 15.41 & 24.67 & 29.30 & 118.26 & 61.26 & 80.39 & 32.49 & 70.42  & 72.72 & 92.39 \\
BAPG  & 48.89 &  27.95 & 30.98 & 122.13 & 66.84 & 16.49 & 35.64  & 16.41 & 72.18 & 92.58  \\\midrule
srGW &1.60 & 152.01 & 21.30 & 63.00 & 12.08 & 172.48& 24.13 & 19.68 & 4.03 & 11.54 \\
RGWD &  16.68 & 955.40 & 27.94 & 4396.41 & 59.69 & 3586.56  & 30.35 & 2629.00 & 4.11 & 16.46 \\
UGW & 89.88 & 176.24 & 25.72 & 4026.93 & 67.30 & 1853.82 & 43.73 & 1046.29 & 0.09 & 0.72 \\
PGW & 2.28 & 479.99 & 13.94 & 544.79 & 20.08 & 348.44& 11.43 & 212.09 &  18.24 & 37.03 \\ \midrule
RGW & \textbf{94.44} & 361.44 & \textbf{53.30} & 834.76 & \textbf{69.38} & 466.91 & \textbf{63.43} & 293.84  & \textbf{75.58} & \textbf{96.24} 
\\\bottomrule
\end{tabular}
}
\label{tab:sub_align_result_1} 
\vspace{-3mm}
\end{table*}

\paragraph{Parameters Setup}
We use unweighted symmetric adjacency matrices $D$ and $\bar{D}$ as input distance matrices. SpecGW employs the heat kernel ($\exp(-L)$) with the normalized graph Laplacian matrix $L$. Both $\mu$ and $\nu$ are set as uniform distributions. For SpecGW, BPG, eBPG, BAPG, srGW, and RGWD, we follow their respective paper setups. FW is implemented using the default PythonOT package. UGW selects the best results from regularization parameter sets $\{0.5, 0.2, 0.1, 0.01, 0.001\}$ and marginal penalty parameter sets $\{0.1, 0.01, 0.001\}$. PGW reports the best results in the range of transported mass from 0.1 to 0.9. RGW sets $\tau_1 = \tau_2 = 0.1$ and selects the best results from the ranges $\{0.05, 0.1, 0.2, 0.5\}$ for marginal relaxation parameters and $\{0.01, 0.05, 0.1, 0.5, 1\}$ for step size parameters ($t_k$, $c_k$, $r_k$).
The transport plan initialization uses different approaches depending on the dataset. For the \textit{Synthetic}, \textit{Proteins}, and \textit{Enzymes} datasets, we employ the $\mathbf{1}_{n\times m} / (nm)$ approach. For the \textit{Douban Online-Offline} dataset, we initialize the transport plan using classical optimal transport conducted on the feature space. Additionally, for all datasets, we initialize $\alpha$ and $\beta$ with uniform distributions.

\begin{figure*}[!t]
    \centering
    \subfigure[Synthetic]{\includegraphics[width=0.3\textwidth]{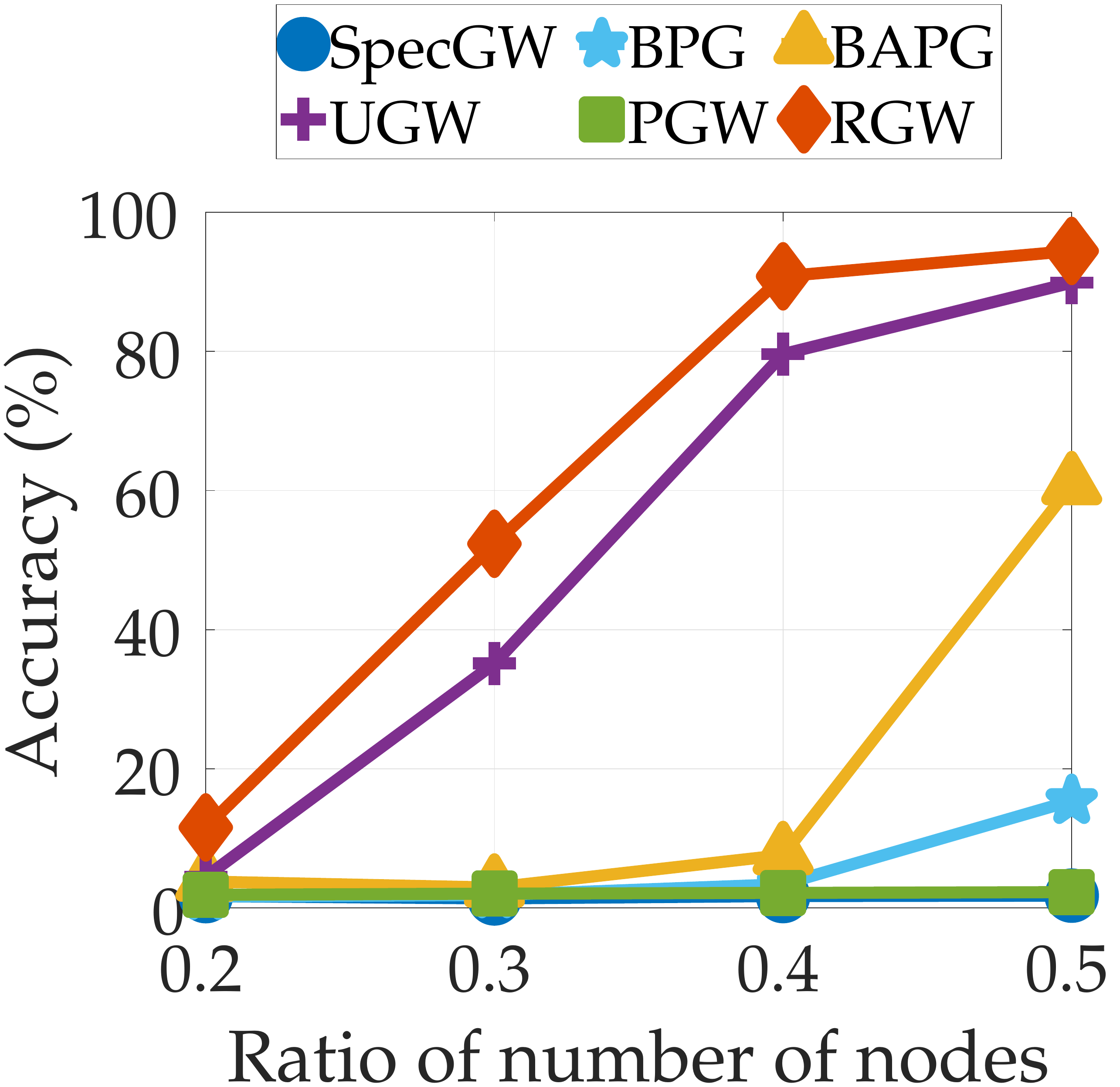}}
    \subfigure[Proteins]{\includegraphics[width=0.3\textwidth]{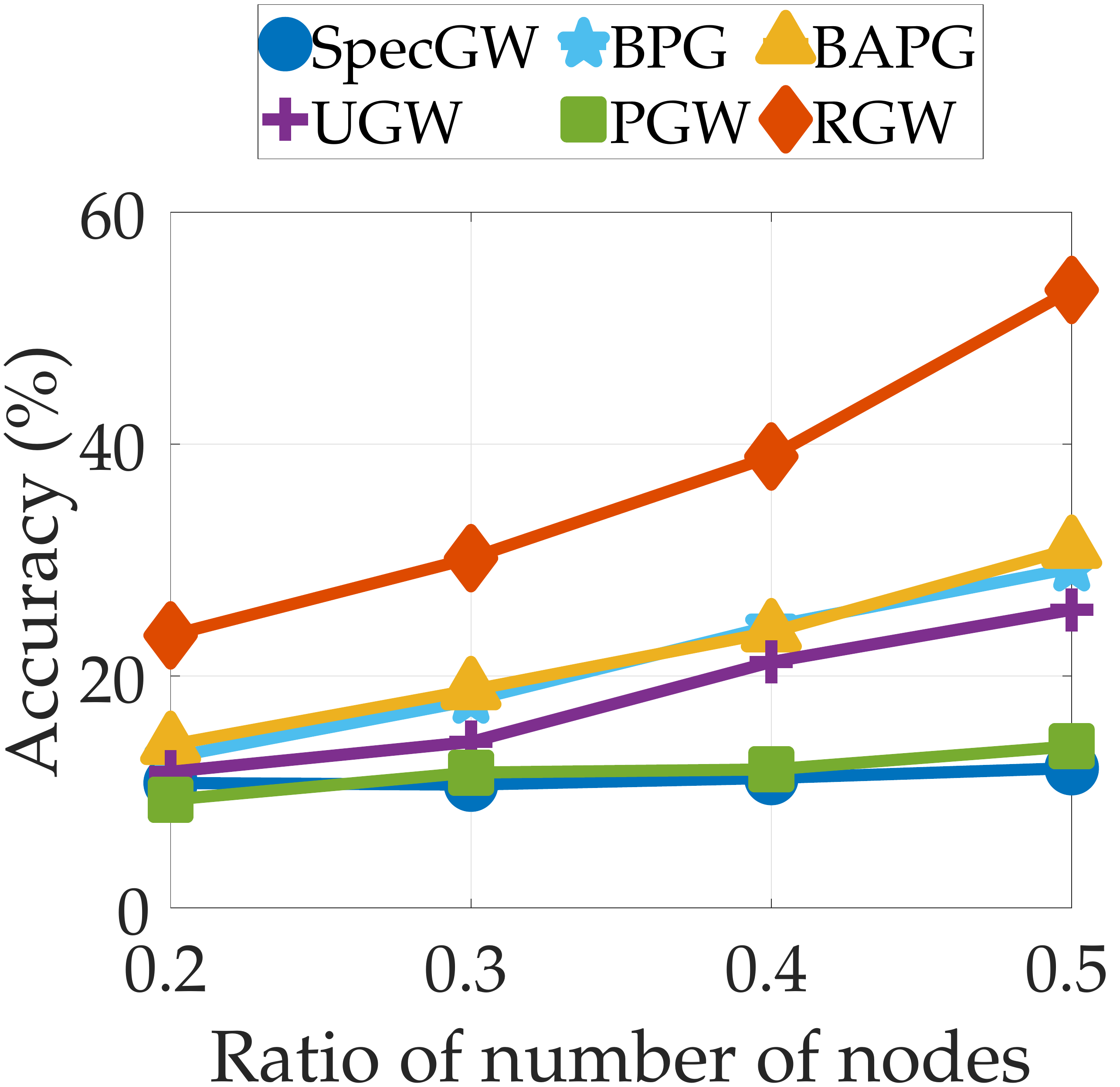}}
    \subfigure[Enzymes]{\includegraphics[width=0.3\textwidth]{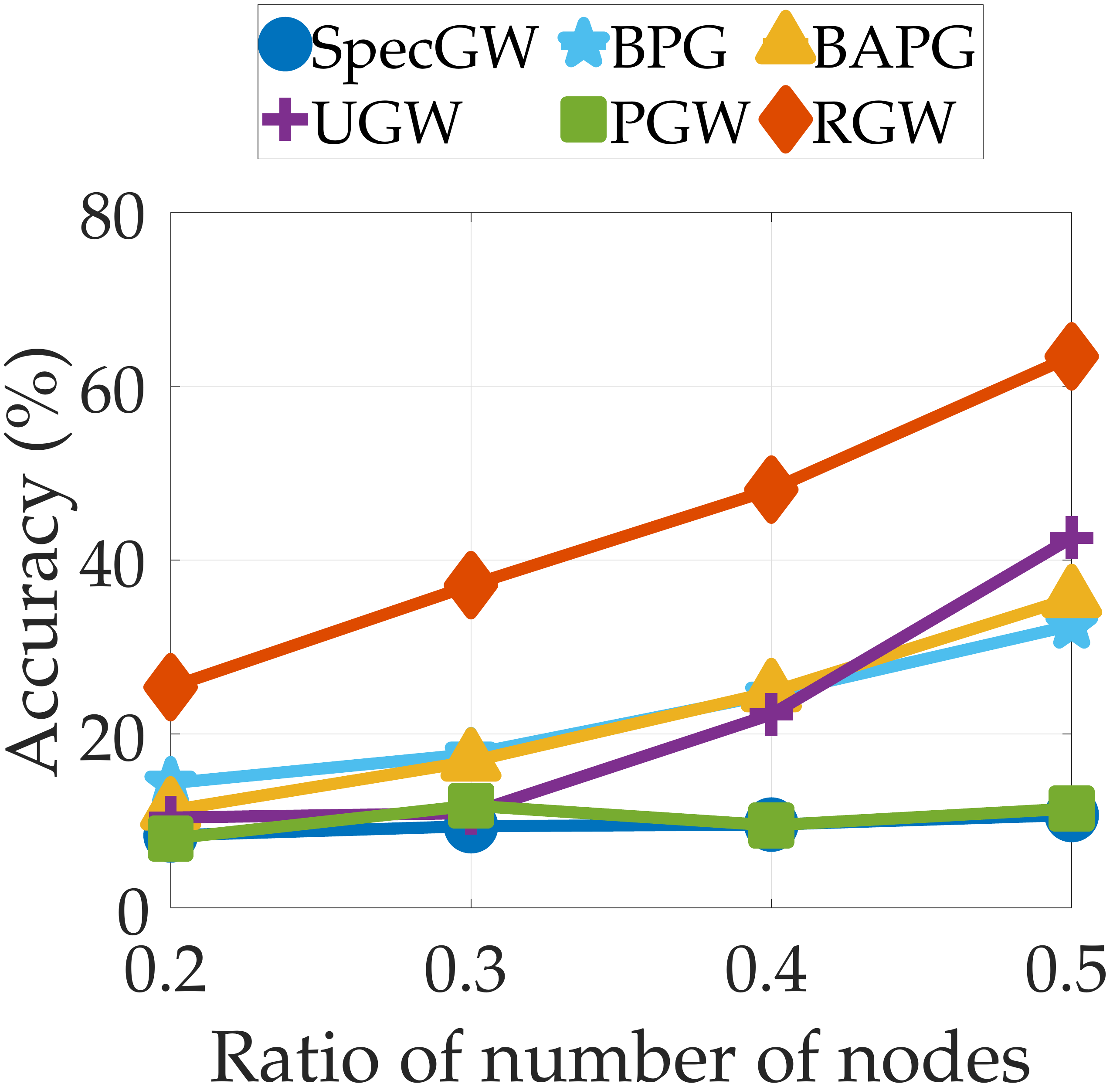}}
    \caption{Graph Alignment performance for selected methods in relation to varying ratios of overlapping subgraphs on Synthetic, Protein, and Enzymes databases.}
    \vspace{-3mm}
    \label{fig:sub_align_result_2}
\end{figure*}

\paragraph{Result of All Methods} Table \ref{tab:sub_align_result_1} and Figure \ref{fig:sub_align_result_2} provide a comprehensive comparison of matching accuracy and computation time across various methods on the \textit{Synthetic}, \textit{Proteins}, and \textit{Enzymes} datasets. Notably, RGW demonstrates superior accuracy compared to other methods, while maintaining comparable computation time with UGW and PGW. On the other hand, methods for computing the balanced GW exhibit poor performance on all datasets, particularly on the extensive \textit{Synthetic} graph database. This can be attributed to their struggle in fulfilling the hard marginal constraint on the source side and addressing the presence of outliers. Furthermore, the introduction of local minima in the balanced GW problem due to partial source graph structures contributes to the suboptimal results. UGW performs well with low target graph partiality but suffers significant degradation as partiality increases. PGW, aimed at mitigating outlier impact, inadvertently affects the matching between clean samples by reducing the mass transported from the source domain.  In addition, RGW achieved the best results with the highest Hit@1 and Hit@10 on the \textit{Douban Online-Offline} dataset, improving performance from 4.04\% and 14.9\% respectively for the initial point created by features.

\begin{wrapfigure}{rt}{0.5\textwidth}
	\centering
    \vspace{-3mm}
    \includegraphics[width=0.5\textwidth]{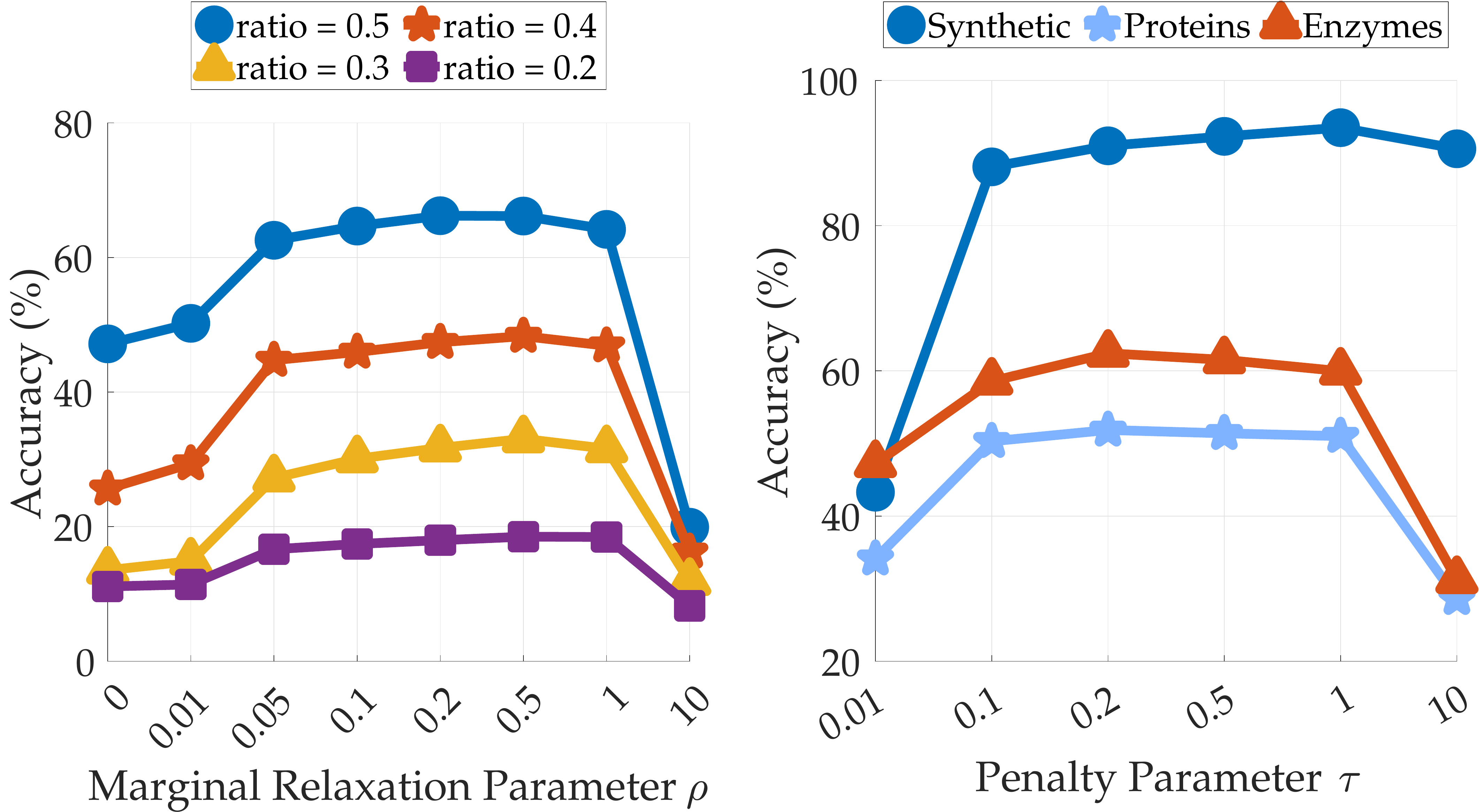} 
	\caption{Sensitivity analysis of $\rho$ with fixed $\tau=0.1$ on the Enzymes database, and sensitivity analysis of $\tau$ with fixed $\rho=0.2$ on the Synthetic, Proteins, and Enzymes databases.}
\label{fig:robust}
\vspace{-8mm}
\end{wrapfigure}

\paragraph{Selection of Hyperparameters $\rho$ and $\tau$} The hyperparameters $\rho_1$, $\rho_2$, $\tau_1$, and $\tau_2$ in our formulation serve distinct roles. Specifically, $\rho_1$ and $\rho_2$ control the extent of marginal relaxation, while $\tau_1$ and $\tau_2$ act as the marginal penalty parameters. As indicated by Theorem \ref{thm: rgw_upperbound}, we have the flexibility to fix the value of $\tau$ and adjust $\rho$ to accommodate outlier effects. To determine optimal hyperparameter values, we conduct two analyses. First, we examine the effect of varying $\rho$ while maintaining a fixed subgraph node ratio and constant $\tau$ on the \textit{Enzymes} dataset. Second, we adjust $\tau$ while holding $\rho$ constant and assess the impact across the \textit{Synthesis}, \textit{Proteins}, and \textit{Enzymes} datasets in the task of matching a 50\% subgraph to the entire graph. In Figure \ref{fig:robust}, the left side demonstrates that while keeping the ratio and $\tau$ fixed but varying $\rho$, accuracy diminishes when $\rho$ becomes either excessively large or too small. However, within the range of 0.05 to 1, accuracy remains resilient, highlighting the mitigating effect of marginal relaxation on outlier influence. Notably, accuracy significantly improves when $\rho$ falls within the range of 0.05 to 1 compared to the scenario with $\rho$ = 0, thus affirming the significance of marginal relaxation in our model. A similar trend is observed for $\tau$ on the right side of Figure \ref{fig:robust}, where maintaining a fixed ratio and $\rho$ while adjusting $\tau$ results in stable accuracy within the range of 0.1 to 1 but declining beyond this range. Consequently, in RGW computations, we can initially set $\rho$ to 0.2 and $\tau$ to 0.1 by default and subsequently adjust $\rho$ based on the outlier ratio: increasing it for a large ratio and decreasing it for a small ratio.

\subsection{Tightness of the Bound in Theorem~\ref{thm: rgw_upperbound}}
Our primary focus is on scenarios where the GW distance between clean samples is nearly zero or zero due to noise, such as in partial shape correspondence and subgraph alignment tasks. In such cases, it is possible to find an isometric mapping from the query subgraph to a portion of the entire graph. By appropriately selecting the value of $\rho$, as discussed in Section~\ref{subsec:robust}, the upper bound in RGW becomes the GW distance between clean samples, which is zero. As RGW is always nonnegative, this upper bound is tight in this context.

To empirically validate this, we conducted experiments on the toy example in Section~\ref{subsec:partial}, and Figure~\ref{fig:bound}  (a) illustrates the function values of PGW, UGW, and RGW with varying outlier ratios. The results confirm that the value of RGW can remain close to zero as the ratio of outliers increases. Additionally, Figure~\ref{fig:bound} (b) shows the function value of RGW and its upper bound as $\rho$ varies. Both the RGW value and its upper bound decrease, converging to zero as $\rho$ increases. This observation provides empirical support for Theorem~\ref{thm: rgw_upperbound}. Regarding UGW and its upper bound with changing $\tau$, we observed that both the UGW value and its upper bound increase as $\tau$ becomes larger, as shown in Figure~\ref{fig:bound} (c).  Unlike RGW, UGW's $\tau$ must strike a balance between reducing outlier impact and preserving marginal distortion in the transport plan. This demands a careful balance and caution against setting $\tau$ excessively close to zero, which could lead to over-relaxation and potentially deteriorate the performance.
\begin{figure}[!htbp]
	\centering
    \subfigure[Values of PGW, UGW, and RGW]{\includegraphics[width=0.326\textwidth]{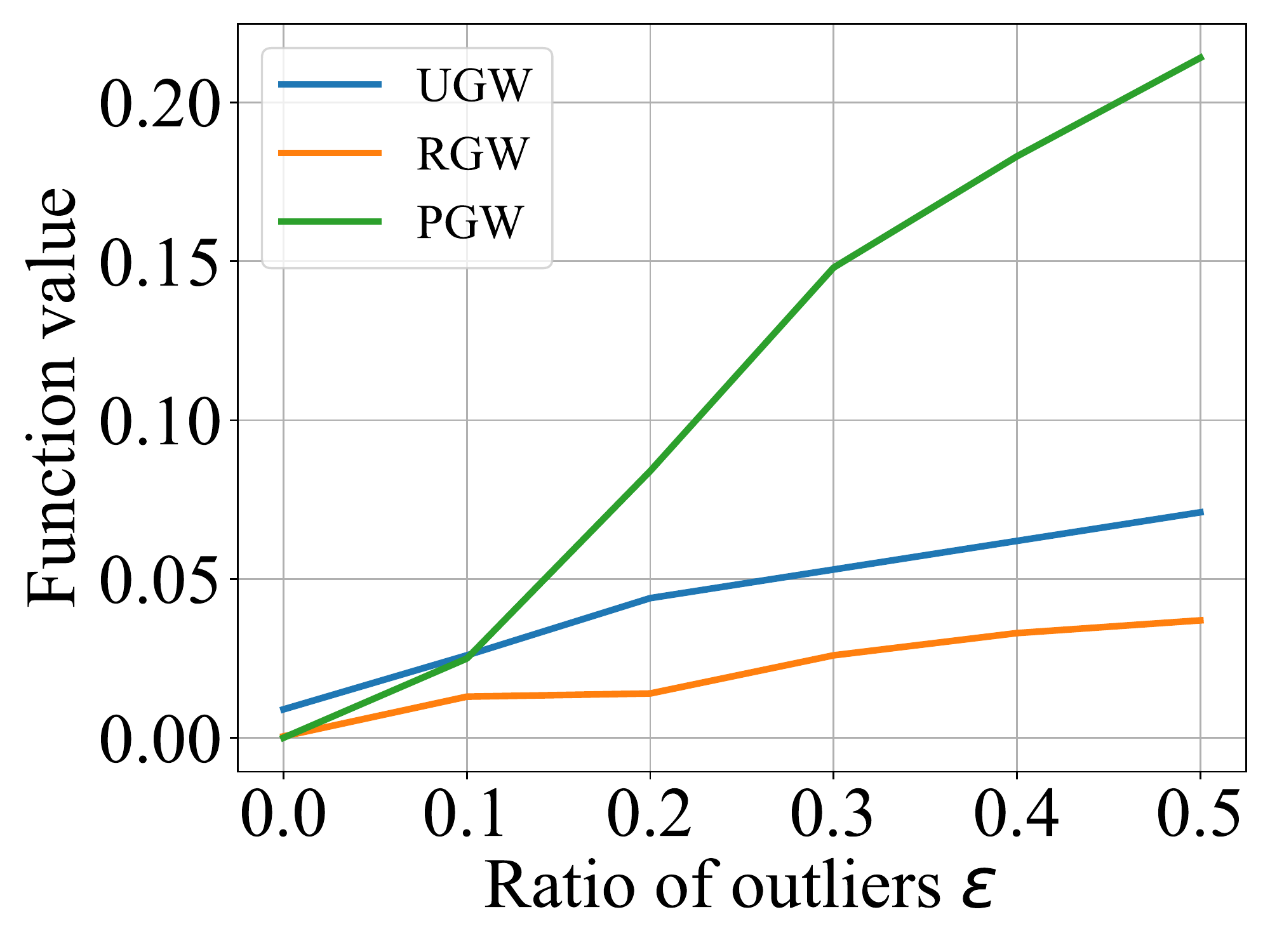}} 
    \subfigure[RGW value and upper bound]{\includegraphics[width=0.326\textwidth]{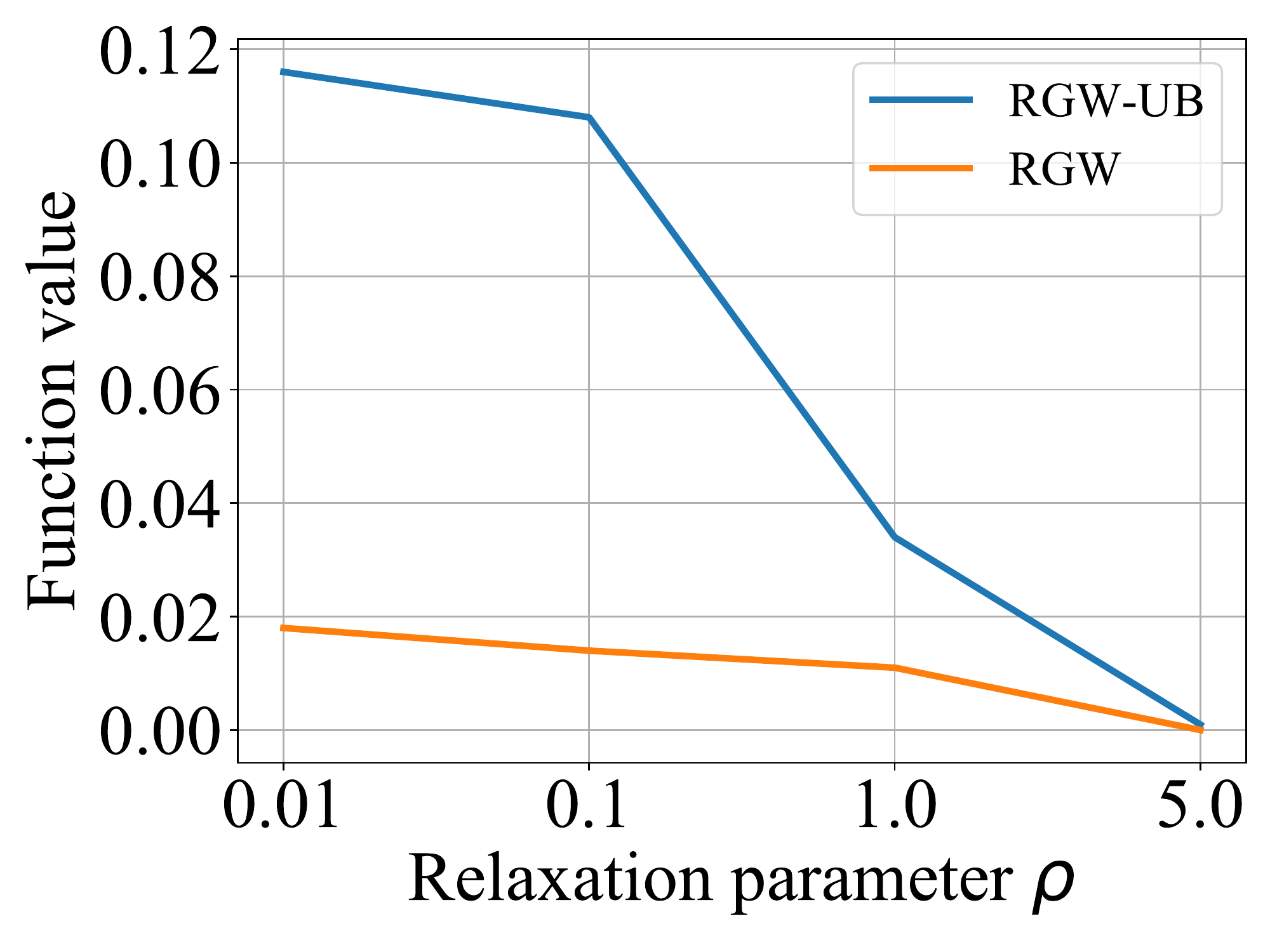}} 
    \subfigure[UGW value and upper bound]{\includegraphics[width=0.326\textwidth]{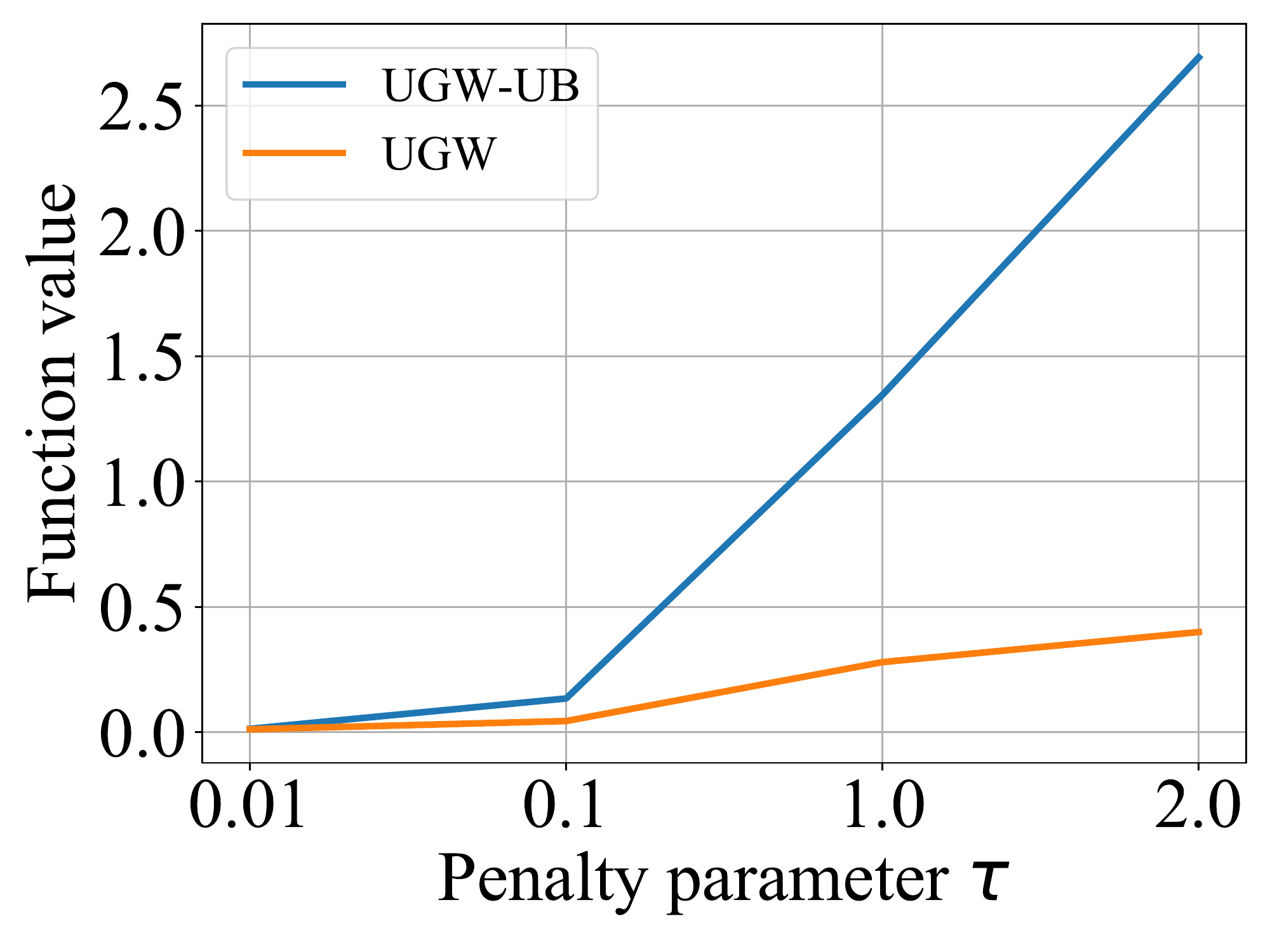}}
    \vspace{-2mm}
	\caption{(a) Function values of PGW, UGW, and RGW for varying $\epsilon$; (b) Function value of RGW and its upper bound for different $\rho$; (c) Function value of UGW and corresponding upper bound for different $\tau$.}
    \vspace{-3mm}
\label{fig:bound}
\end{figure}

\section{Conclusion}
\label{sec:conclusion}
In this paper, we introduce RGW, a robust reformulation of Gromov-Wasserstein that incorporates distributionally optimistic modeling. Our theoretical analysis demonstrates its robustness to outliers, establishing RGW as a reliable estimator. We propose a Bregman proximal alternating linearized minimization method to efficiently solve RGW. Extensive numerical experiments validate our theoretical results and demonstrate the effectiveness of the proposed algorithm. Regarding the robust estimation of Gromov-Wasserstein, a natural question is whether we can recover the transport plan from the RGW model. On the computational side, our algorithm suffers from the heavy computation cost due to the use of the unbalanced OT as our subroutine, which limits its application in large-scale real-world settings. To address this issue, a natural future direction is to develop single-loop algorithms to leverage the model benefits of robust GW for real applications. 

\clearpage

\begin{ack}
Anthony Man-Cho So is supported in part by the Hong Kong Research Grants Council (RGC) General Research Fund (GRF) project CUHK 14203920.
\end{ack}

\bibliography{neurips_2023}

\begin{thebibliography}{10}

\bibitem{agarwal2020optimistic}
Rishabh Agarwal, Dale Schuurmans, and Mohammad Norouzi.
\newblock An optimistic perspective on offline reinforcement learning.
\newblock In {\em International Conference on Machine Learning}, pages 104--114. PMLR, 2020.

\bibitem{ahookhosh2021multi}
Masoud Ahookhosh, Le~Thi~Khanh Hien, Nicolas Gillis, and Panagiotis Patrinos.
\newblock Multi-block {Bregman} proximal alternating linearized minimization and its application to orthogonal nonnegative matrix factorization.
\newblock {\em Computational Optimization and Applications}, 79(3):681--715, 2021.

\bibitem{alvarez2018gromov}
David Alvarez-Melis and Tommi~S Jaakkola.
\newblock {Gromov-Wasserstein} alignment of word embedding spaces.
\newblock {\em arXiv preprint arXiv:1809.00013}, 2018.

\bibitem{balaji2020robust}
Yogesh Balaji, Rama Chellappa, and Soheil Feizi.
\newblock Robust optimal transport with applications in generative modeling and domain adaptation.
\newblock {\em Advances in Neural Information Processing Systems}, 33:12934--12944, 2020.

\bibitem{bolte2014proximal}
J{\'e}r{\^o}me Bolte, Shoham Sabach, and Marc Teboulle.
\newblock Proximal alternating linearized minimization for nonconvex and nonsmooth problems.
\newblock {\em Mathematical Programming}, 146(1):459--494, 2014.

\bibitem{bronstein2008numerical}
Alexander~M Bronstein, Michael~M Bronstein, and Ron Kimmel.
\newblock {\em Numerical geometry of non-rigid shapes}.
\newblock Springer Science \& Business Media, 2008.

\bibitem{bubeck2012regret}
S{\'e}bastien Bubeck, Nicolo Cesa-Bianchi, et~al.
\newblock Regret analysis of stochastic and nonstochastic multi-armed bandit problems.
\newblock {\em Foundations and Trends{\textregistered} in Machine Learning}, 5(1):1--122, 2012.

\bibitem{bunne2019learning}
Charlotte Bunne, David Alvarez-Melis, Andreas Krause, and Stefanie Jegelka.
\newblock Learning generative models across incomparable spaces.
\newblock In {\em International Conference on Machine Learning}, pages 851--861. PMLR, 2019.

\bibitem{cao2021contextual}
Junyu Cao and Rui Gao.
\newblock Contextual decision-making under parametric uncertainty and data-driven optimistic optimization.
\newblock {\em Available at Optimization Online}, 2021.

\bibitem{chapel2020partial}
Laetitia Chapel, Mokhtar~Z Alaya, and Gilles Gasso.
\newblock Partial optimal transport with applications on positive-unlabeled learning.
\newblock {\em Advances in Neural Information Processing Systems}, 33:2903--2913, 2020.

\bibitem{chen2018robust}
Mengjie Chen, Chao Gao, and Zhao Ren.
\newblock Robust covariance and scatter matrix estimation under {Huber’s} contamination model.
\newblock {\em The Annals of Statistics}, 46(5):1932--1960, 2018.

\bibitem{chen2022online}
Sitan Chen, Frederic Koehler, Ankur Moitra, and Morris Yau.
\newblock Online and distribution-free robustness: Regression and contextual bandits with huber contamination.
\newblock In {\em 2021 IEEE 62nd Annual Symposium on Foundations of Computer Science (FOCS)}, pages 684--695. IEEE, 2022.

\bibitem{chizat2018scaling}
Lenaic Chizat, Gabriel Peyr{\'e}, Bernhard Schmitzer, and Fran{\c{c}}ois-Xavier Vialard.
\newblock Scaling algorithms for unbalanced optimal transport problems.
\newblock {\em Mathematics of Computation}, 87(314):2563--2609, 2018.

\bibitem{chowdhury2019gromov}
Samir Chowdhury and Facundo M{\'e}moli.
\newblock The {Gromov--Wasserstein} distance between networks and stable network invariants.
\newblock {\em Information and Inference: A Journal of the IMA}, 8(4):757--787, 2019.

\bibitem{chowdhury2021generalized}
Samir Chowdhury and Tom Needham.
\newblock Generalized spectral clustering via {Gromov-Wasserstein} learning.
\newblock In {\em International Conference on Artificial Intelligence and Statistics}, pages 712--720. PMLR, 2021.

\bibitem{cordella2004sub}
Luigi~P Cordella, Pasquale Foggia, Carlo Sansone, and Mario Vento.
\newblock A (sub) graph isomorphism algorithm for matching large graphs.
\newblock {\em IEEE Transactions on Pattern Analysis and Machine Intelligence}, 26(10):1367--1372, 2004.

\bibitem{flamary2021pot}
R{\'e}mi Flamary, Nicolas Courty, Alexandre Gramfort, Mokhtar~Z Alaya, Aur{\'e}lie Boisbunon, Stanislas Chambon, Laetitia Chapel, Adrien Corenflos, Kilian Fatras, Nemo Fournier, et~al.
\newblock Pot: Python optimal transport.
\newblock {\em The Journal of Machine Learning Research}, 22(1):3571--3578, 2021.

\bibitem{gao2021unsupervised}
Ji~Gao, Xiao Huang, and Jundong Li.
\newblock Unsupervised graph alignment with {Wasserstein} distance discriminator.
\newblock In {\em Proceedings of the 27th ACM SIGKDD Conference on Knowledge Discovery \& Data Mining}, pages 426--435, 2021.

\bibitem{grave2019unsupervised}
Edouard Grave, Armand Joulin, and Quentin Berthet.
\newblock Unsupervised alignment of embeddings with {Wasserstein} procrustes.
\newblock In {\em The 22nd International Conference on Artificial Intelligence and Statistics}, pages 1880--1890. PMLR, 2019.

\bibitem{han2019efficient}
Myoungji Han, Hyunjoon Kim, Geonmo Gu, Kunsoo Park, and Wook-Shin Han.
\newblock Efficient subgraph matching: Harmonizing dynamic programming, adaptive matching order, and failing set together.
\newblock In {\em Proceedings of the 2019 International Conference on Management of Data}, pages 1429--1446, 2019.

\bibitem{jiang2021dfo}
Nan Jiang and Weijun Xie.
\newblock {DFO}: A framework for data-driven decision-making with endogenous outliers.
\newblock {\em Preprint optimization-online.org}, 2021.

\bibitem{kassam1976asymptotically}
S~Kassam and J~Thomas.
\newblock Asymptotically robust detection of a known signal in contaminated non-gaussian noise.
\newblock {\em IEEE Transactions on Information Theory}, 22(1):22--26, 1976.

\bibitem{le2021robust}
Khang Le, Huy Nguyen, Quang~M Nguyen, Tung Pham, Hung Bui, and Nhat Ho.
\newblock On robust optimal transport: Computational complexity and barycenter computation.
\newblock {\em Advances in Neural Information Processing Systems}, 34:21947--21959, 2021.

\bibitem{li2023a}
Jiajin Li, Jianheng Tang, Lemin Kong, Huikang Liu, Jia Li, Anthony Man-Cho So, and Jose Blanchet.
\newblock A convergent single-loop algorithm for relaxation of {Gromov-Wasserstein} in graph data.
\newblock In {\em The Eleventh International Conference on Learning Representations}, 2023.

\bibitem{liu2022robust}
Weijie Liu, Jiahao Xie, Chao Zhang, Makoto Yamada, Nenggan Zheng, and Hui Qian.
\newblock Robust graph dictionary learning.
\newblock In {\em The Eleventh International Conference on Learning Representations}, 2022.

\bibitem{memoli2007use}
Facundo M{\'e}moli.
\newblock {On the use of Gromov-Hausdorff Distances for Shape Comparison}.
\newblock In M.~Botsch, R.~Pajarola, B.~Chen, and M.~Zwicker, editors, {\em Eurographics Symposium on Point-Based Graphics}. The Eurographics Association, 2007.

\bibitem{memoli2009spectral}
Facundo M{\'e}moli.
\newblock Spectral {Gromov-Wasserstein} distances for shape matching.
\newblock In {\em 2009 IEEE 12th International Conference on Computer Vision Workshops, ICCV Workshops}, pages 256--263. IEEE, 2009.

\bibitem{memoli2011gromov}
Facundo M{\'e}moli.
\newblock {Gromov--Wasserstein} distances and the metric approach to object matching.
\newblock {\em Foundations of computational mathematics}, 11:417--487, 2011.

\bibitem{mukherjee2021outlier}
Debarghya Mukherjee, Aritra Guha, Justin~M Solomon, Yuekai Sun, and Mikhail Yurochkin.
\newblock Outlier-robust optimal transport.
\newblock In {\em International Conference on Machine Learning}, pages 7850--7860. PMLR, 2021.

\bibitem{munos2014bandits}
R{\'e}mi Munos et~al.
\newblock From bandits to monte-carlo tree search: {The} optimistic principle applied to optimization and planning.
\newblock {\em Foundations and Trends{\textregistered} in Machine Learning}, 7(1):1--129, 2014.

\bibitem{peyre2016gromov}
Gabriel Peyr{\'e}, Marco Cuturi, and Justin Solomon.
\newblock {Gromov-Wasserstein} averaging of kernel and distance matrices.
\newblock In {\em International Conference on Machine Learning}, pages 2664--2672. PMLR, 2016.

\bibitem{pham2020unbalanced}
Khiem Pham, Khang Le, Nhat Ho, Tung Pham, and Hung Bui.
\newblock On unbalanced optimal transport: An analysis of sinkhorn algorithm.
\newblock In {\em International Conference on Machine Learning}, pages 7673--7682. PMLR, 2020.

\bibitem{rodola2012game}
Emanuele Rodola, Alex~M Bronstein, Andrea Albarelli, Filippo Bergamasco, and Andrea Torsello.
\newblock A game-theoretic approach to deformable shape matching.
\newblock In {\em 2012 IEEE Conference on Computer Vision and Pattern Recognition}, pages 182--189. IEEE, 2012.

\bibitem{rodola2017partial}
Emanuele Rodol{\`a}, Luca Cosmo, Michael~M Bronstein, Andrea Torsello, and Daniel Cremers.
\newblock Partial functional correspondence.
\newblock In {\em Computer graphics forum}, volume~36, pages 222--236. Wiley Online Library, 2017.

\bibitem{sejourne2021unbalanced}
Thibault S{\'e}journ{\'e}, Fran{\c{c}}ois-Xavier Vialard, and Gabriel Peyr{\'e}.
\newblock The unbalanced {Gromov Wasserstein} distance: Conic formulation and relaxation.
\newblock {\em Advances in Neural Information Processing Systems}, 34, 2021.

\bibitem{solomon2016entropic}
Justin Solomon, Gabriel Peyr{\'e}, Vladimir~G Kim, and Suvrit Sra.
\newblock Entropic metric alignment for correspondence problems.
\newblock {\em ACM Transactions on Graphics (ToG)}, 35(4):1--13, 2016.

\bibitem{tang2023robust}
Jianheng Tang, Weiqi Zhang, Jiajin Li, Kangfei Zhao, Fugee Tsung, and Jia Li.
\newblock Robust attributed graph alignment via joint structure learning and optimal transport.
\newblock In {\em 2023 IEEE 39th International Conference on Data Engineering (ICDE)}, pages 1638--1651, 2023.

\bibitem{tang2023FGWEA}
Jianheng Tang, Kangfei Zhao, and Jia Li.
\newblock A fused {G}romov-{W}asserstein framework for unsupervised knowledge graph entity alignment.
\newblock In {\em Findings of the Association for Computational Linguistics: ACL 2023}, pages 3320--3334.

\bibitem{titouan2019optimal}
Vayer Titouan, Nicolas Courty, Romain Tavenard, and R{\'e}mi Flamary.
\newblock Optimal transport for structured data with application on graphs.
\newblock In {\em International Conference on Machine Learning}, pages 6275--6284. PMLR, 2019.

\bibitem{tran2023unbalanced}
Quang~Huy Tran, Hicham Janati, Nicolas Courty, Rémi Flamary, Ievgen Redko, Pinar Demetci, and Ritambhara Singh.
\newblock Unbalanced {CO-Optimal} transport.
\newblock In {\em Thirty-Seventh AAAI Conference on Artificial Intelligence (AAAI)}, 2023.

\bibitem{vincent2021online}
C{\'e}dric Vincent-Cuaz, Titouan Vayer, R{\'e}mi Flamary, Marco Corneli, and Nicolas Courty.
\newblock Online graph dictionary learning.
\newblock In {\em International Conference on Machine Learning}, pages 10564--10574. PMLR, 2021.

\bibitem{vincent2022semi}
Cédric Vincent-Cuaz, Rémi Flamary, Marco Corneli, Titouan Vayer, and Nicolas Courty.
\newblock Semi-relaxed {Gromov Wasserstein} divergence with applications on graphs.
\newblock In {\em International Conference on Learning Representations (ICLR)}, 2022.

\bibitem{xu2022representing}
Hongteng Xu, Jiachang Liu, Dixin Luo, and Lawrence Carin.
\newblock Representing graphs via {Gromov-Wasserstein} factorization.
\newblock {\em IEEE Transactions on Pattern Analysis and Machine Intelligence}, 2022.

\bibitem{xu2019scalable}
Hongteng Xu, Dixin Luo, and Lawrence Carin.
\newblock Scalable {Gromov-Wasserstein} learning for graph partitioning and matching.
\newblock {\em Advances in Neural Information Processing Systems}, 32, 2019.

\bibitem{xu2021learning}
Hongteng Xu, Dixin Luo, Lawrence Carin, and Hongyuan Zha.
\newblock Learning graphons via structured {Gromov-Wasserstein} barycenters.
\newblock In {\em Proceedings of the AAAI Conference on Artificial Intelligence}, volume~35, pages 10505--10513, 2021.

\bibitem{xu2019gromov}
Hongteng Xu, Dixin Luo, Hongyuan Zha, and Lawrence~Carin Duke.
\newblock {Gromov-Wasserstein} learning for graph matching and node embedding.
\newblock In {\em International Conference on Machine Learning}, pages 6932--6941. PMLR, 2019.

\end{thebibliography}
\bibliographystyle{plain}

\newpage
\appendix
\onecolumn

\section{Organization of the Appendix}
We organize the appendix as follows:
\begin{itemize}
    \item The proof details of Theorem \ref{thm: rgw_upperbound} and discussion of Remark \ref{re:ugw-upperbound} are given in Section \ref{appendex:rgw_upperbound}.
    \item The proof details of the algorithm, including the properties of $p$, the convergence analysis of Newton's method, and Bregman proximal alternating linearized minimization method are collected in \ref{appendix:algo_details}.
    \item Additional experiment results are summarized in Section \ref{Appendix:experiment}.
\end{itemize}

\section{Proof of Robust Guarantee in Section~\ref{subsec:robust}}
\label{appendex:rgw_upperbound}
\subsection{Proof of Theorem~\ref{thm: rgw_upperbound}}
\begin{proof}
$\textnormal{GW}^{\textnormal{rob}}_{\rho_1,\rho_2} (\mu,\nu)$
  is defined as
    \begin{equation*}
    \begin{aligned}
        \textnormal{GW}^{\textnormal{rob}}_{\rho_1,\rho_2} (\mu,\nu)  = & \inf_{\alpha \in \mathcal{P}(X),~\beta \in \mathcal{P}(Y)} \ \inf_{\pi \in \mathcal{M}^+(X\times Y)}  \iint |d_X(x,x')-d_Y(y,y'))|^2 d \pi(x, y)d \pi\left(x^{\prime}, y^{\prime}\right)   +   \\ 
        & \qquad \qquad \qquad \qquad \quad \qquad \quad \tau_1 d_{\textnormal{\bf KL}}(\pi_1 , \alpha) + \tau_2 d_{\textnormal{\bf KL}}(\pi_2 , \beta)\\
        & \text{s.t.} \quad  d_{\textnormal{\bf KL}}(\mu, \alpha) \leq \rho.
    \end{aligned}
    \end{equation*}
    Consider $\pi_c$, the optimal transport plans for $\textnormal{GW}(\mu_c , \nu_c)$. Notably, $\pi_c$ serves as a feasible solution for $\textnormal{GW}^{\textnormal{rob}}_{\rho_1,\rho_2}(\mu,\nu)$. Consequently, we can deduce that:
    \begin{align*}
        \textnormal{GW}^{\textnormal{rob}}_{\rho_1,\rho_2} (\mu, \nu) \leq & \inf_{\alpha \in \mathcal{P}(X),~\beta \in \mathcal{P}(Y)} \iint |d_X(x,x')-d_Y(y,y'))|^2 d \pi_c(x, y) d \pi_c\left(x^{\prime}, y^{\prime}\right) + \\
        &\qquad \qquad \qquad \quad \tau_1 d_{\textnormal{\bf KL}}((\pi_c)_1, \alpha) + \tau_2 d_{\textnormal{\bf KL}}((\pi_c)_2 , \beta) \\
        & \quad \text{s.t.}~d_{\textnormal{\bf KL}}(\mu , \alpha) \leq \rho_1,~d_{\textnormal{\bf KL}}(\nu , \beta) \leq \rho_2\\
        = & \ \textnormal{GW}(\mu_c, \nu_c) + \inf_{\alpha \in \mathcal{P}(X),~d_{\textnormal{\bf KL}}(\mu,\alpha) \leq \rho_1}   d_{\textnormal{\bf KL}}(\mu_c , \alpha) + \inf_{\beta \in \mathcal{P}(Y), \  d_{\textnormal{\bf KL}}(\nu,\beta) \leq \rho_2} d_{\textnormal{\bf KL}}(\nu_c , \beta).
             \end{align*}
    To establish an upper bound for $\textnormal{GW}^{\textnormal{rob}}_{\rho_1,\rho_2} (\mu,\nu)$, let us begin by addressing the following problem:
    \begin{equation}
    \label{eq:kldivbound}
     \inf_{\alpha \in \mathcal{P}(X),~d_{\textnormal{\bf KL}}(\mu,\alpha) \leq \rho_1}   d_{\textnormal{\bf KL}}(\mu_c , \alpha)
    \end{equation}

    We consider the distribution of the form $(1-\gamma)\mu + \gamma \mu_c$, for $\gamma \in [0,1]$. Then we prove that if $\gamma \leq \min \left({\dfrac{\rho_1}{\epsilon_1 d_{\textnormal{\bf KL}}(\mu_a, \mu_c)}},1 \right)$, then $(1-\gamma)\mu + \gamma \mu_c$ is a feasible solution for problem \eqref{eq:kldivbound}.
    
    By the joint convexity of KL divergence, we have
    \begin{equation*}
    \begin{aligned}
    d_{\textnormal{\bf KL}}\left(\mu , (1-\gamma)\mu + \gamma \mu_c \right) & \leq \gamma d_{\textnormal{\bf KL}} \left(\mu, \mu_c \right) \\
        & = \gamma d_{\textnormal{\bf KL}}\left((1-\epsilon_1) \mu_c + \epsilon_1 \mu_a ,\mu_c\right) \\
        & \leq \gamma \epsilon_1 d_{\textnormal{\bf KL}}(\mu_a , \mu_c) \\
        & \leq \rho_1.
    \end{aligned}
    \end{equation*}
    Therefore, 
    \begin{align*}
        d_{\textnormal{\bf KL}}\left(\mu_c , (1-\gamma)\mu + \gamma\mu_c \right) & \leq (1-\gamma) d_{\textnormal{\bf KL}}\left(\mu_c , \mu\right) \\
        & = (1-\gamma) d_{\textnormal{\bf KL}} \left(\mu_c, (1-\epsilon_1) \mu_c + \epsilon_1\mu_a\right) \\
        & \leq (1-\gamma) \epsilon_1 d_{\textnormal{\bf KL}}(\mu_c , \mu_a)
    \end{align*}
    The largest value $\gamma$ can take is $\dfrac{\rho_1}{\epsilon_1 d_{\textnormal{\bf KL}}(\mu_a, \mu_c)}$. This gives 
    \begin{equation*}
        \inf_{\alpha \in \mathcal{P}(X), \  d_{\textnormal{\bf KL}}(\mu,\alpha) \leq \rho_1}   d_{\textnormal{\bf KL}}(\mu_c , \alpha) \leq \max \left( 0, 1- \frac{\rho_1}{\epsilon_1 d_{\textnormal{\bf KL}}(\mu_a, \mu_c)}\right) \epsilon_1 d_{\textnormal{\bf KL}}(\mu_c, \mu_a).
    \end{equation*}

    Similarly, we can prove that 
    \begin{equation*}
        \inf_{\beta \in \mathcal{P}(Y), \  d_{\textnormal{\bf KL}}(\nu,\beta) \leq \rho_2}  d_{\textnormal{\bf KL}}(\nu_c , \beta) \leq \max \left( 0, 1- \frac{\rho_2}{\epsilon_2 d_{\textnormal{\bf KL}}(\nu_a, \nu_c)}\right) \epsilon_2 d_{\textnormal{\bf KL}}(\nu_c, \nu_a).
    \end{equation*}
    This completes the proof.
\end{proof}

\subsection{Proof and Discussion of Remark~\ref{re:ugw-upperbound}}
Utilizing the same notations as in the proof of Theorem~\ref{thm: rgw_upperbound}, and considering that $\pi_c$ is a feasible solution to the UGW problem, substituting it directly leads to the desired result. 

Furthermore, referring to \cite[Theorem 1]{tran2023unbalanced} and adapting the notations to our framework, we have the following property: specifically, when considering only $\mu$ is corrupted with outliers, characterized by $\mu = (1-\epsilon_1)\mu_c + \epsilon_1 \mu_a$, while $\nu = \nu_c$. If we let $\delta = 2(\tau_1 + \tau_2)\epsilon_1$, and $K = M + \frac{1}{M}\text{UGW}(\mu_c,\nu) + \delta$, where $M$ represents the transported mass between clean data and $\Delta_\infty$ signifies the maximal deviation between the contaminated source and the target, then the following inequality holds:
\[\text{UGW}(\mu,\nu) \leq (1-\epsilon_1)\text{UGW}(\mu_c, \nu) + \delta M \left[1-\exp\left(-\frac{\Delta_\infty (1+M)+K}{\delta M}\right)\right].\]
Given that $\text{UGW}(\mu_c,\nu) \leq \text{GW}(\mu_c,\nu)$, we can derive the following relationship:
\begin{equation}
    \small
    \text{UGW}(\mu,\nu) \leq (1-\epsilon_1)\text{GW}(\mu_c, \nu) + \delta M \left[1-\exp\left(-\frac{\Delta_\infty (1+M)+ M + \frac{1}{M} \text{GW}(\mu_c, \nu)+ \delta}{\delta M}\right)\right].
    \label{eq:ugw-upperbound-2}
\end{equation}
Comparing this result to Remark~\ref{re:ugw-upperbound}, \eqref{eq:ugw-upperbound-2} also incorporates $\text{GW}(\mu_c, \nu)$ in the exponential term and $M$, the transported mass between clean data. These variables may be influenced by the marginal parameters $\tau_1$ and $\tau_2$. As a result, finding an optimal choice for $\tau_1$ and $\tau_2$ to establish a tight upper bound for UGW using the GW distance between clean samples, as presented in \eqref{eq:ugw-upperbound-2}, proves challenging. As outlined in Remark~\ref{re:ugw-upperbound}, it is worth noting that setting $\tau_1 = \tau_2 = 0$ represents the sole means to attain a tight upper bound. Nevertheless, in real-world applications, this approach is impractical, as it could lead to over-relaxation and compromise the performance.

\section{Proof Details of Bregman Proximal Alternating Linearized Minimization Method for Robust GW}
Given a vector x, we use $\|x\|_2$ to denote its $\ell_2$ norm. We use $\|X\|_F$ to denote the Frobenius norm of matrix $X$. For a convex set $C$ and a point $x$, we define the distance between $C$ and $x$ as
\[\text{dist}(x,C) = \min_{y\in C} \|x-y\|_2.\]
\label{appendix:algo_details}
\subsection{Proof of Proposition \ref{prop:alpha-dual}}
\begin{proof}
    Problem \eqref{eq:alpha-sub} can be written as
    \begin{small}
    \begin{equation*}
     \min_{\alpha \in \Delta^n} \sum_{i=1}^n \left( (\pi_1^{k+1})_i \log \left(\frac{(\pi_1^{k+1})_i}{\alpha_i}\right) - (\pi_1^{k+1})_i + \alpha_i \right) + \frac{1}{c_k} \sum_{i=1}^k \alpha^k_i \log \left(\frac{\alpha_i^k}{\alpha_i}\right)  + w\left(\sum_{i=1}^n \mu_i \log\left(\frac{\mu_i}{\alpha_i}\right)-\rho_1\right).
    \end{equation*}
    \end{small}
    We first consider a relaxed problem of the problem above:
    \begin{small}
    \begin{equation*}
     \min_{\alpha^T \mathbf{1}_n = 1} \sum_{i=1}^n \left( (\pi_1^{k+1})_i \log \left(\frac{(\pi_1^{k+1})_i}{\alpha_i}\right) - (\pi_1^{k+1})_i + \alpha_i \right) + \frac{1}{c_k} \sum_{i=1}^n \alpha^k_i \log \left(\frac{\alpha_i^k}{\alpha_i}\right)  + w\left(\sum_{i=1}^n \mu_i \log\left(\frac{\mu_i}{\alpha_i}\right)-\rho_1\right).
    \end{equation*}
    \end{small}
    Consider the Lagrangian function of the relaxed problem 
    \begin{equation*}
        \begin{split}
            L(\alpha,\lambda) = &\sum_{i=1}^n \left( (\pi_1^{k+1})_i \log \left(\frac{(\pi_1^{k+1})_i}{\alpha_i}\right) - (\pi_1^{k+1})_i + \alpha_i \right) + \frac{1}{c_k} \sum_{i=1}^n \alpha^k_i \log \left(\frac{\alpha_i^k}{\alpha_i}\right)  + \\
            & w\left(\sum_{i=1}^n \mu_i \log\left(\frac{\mu_i}{\alpha_i}\right)-\rho_1\right) + 
            \lambda\left(\alpha^T \mathbf{1}_n - 1\right).
        \end{split}
    \end{equation*}

    Let
    \[\frac{\partial L}{\partial \alpha} = 
    \begin{pmatrix}
    -\frac{(\pi_1^{k+1})_1}{\alpha_1} + 1\\
    -\frac{(\pi_1^{k+1})_2}{\alpha_2} + 1\\
    \vdots \\
    -\frac{(\pi_1^{k+1})_n}{\alpha_n} + 1
    \end{pmatrix}
    + \frac{1}{c_k}
    \begin{pmatrix}
    -\frac{\alpha^{k}_1}{\alpha_1} \\
    -\frac{\alpha^{k}_2}{\alpha_2} \\
    \vdots \\
    -\frac{\alpha^{k}_n}{\alpha_n}   
    \end{pmatrix}
    + w
    \begin{pmatrix}
    -\frac{\mu_1}{\alpha_1} \\
    -\frac{\mu_2}{\alpha_2} \\
    \vdots \\
    -\frac{\mu_n}{\alpha_n}   
    \end{pmatrix}
    + 
    \begin{pmatrix}
    \lambda \\
    \lambda \\
    \vdots \\
   \lambda  
    \end{pmatrix} = 0.
    \]
    Then we obtain that 
    \[\alpha_i = \frac{(\pi_1^{k+1})_i + \frac{1}{c_k}\alpha^{k}_i + w\mu_i}{1+\lambda}.\]
    Since $\sum_{i=1}^n \alpha_i = 1$, $\sum_{i=1}^n \alpha_i^{k} = 1$, and $\sum_{i=1}^n \mu_i = 1$, then $1+\lambda = \sum_{i j} \pi_{i j}^{k+1} + \frac{1}{c_k} + w$. Thus, the optimal solution to the relaxed problem is 
    \[\hat\alpha(w) = \frac{\pi^{k+1} \mathbf{1}_m + \frac{1}{c_k} \alpha^k +w\mu}{\sum_{ij} \pi_{i j}^{k+1} + \frac{1}{c_k} + w}.\]
    We can see that $\hat \alpha(w) \geq 0$. Hence, $\hat \alpha(w)$ is also the optimal solution to problem \eqref{eq:alpha-sub}. 

    Since if $w$ satisfies (i) or (ii), $(\hat{\alpha}(w), w)$ is a solution to KKT conditions of problem \eqref{eq:alpha-update}, therefore, $\hat{\alpha}(w)$ is an optimal solution to problem \eqref{eq:alpha-update}. Next, we prove that $p$ is differentiable when $h$ is relative entropy. Problem \eqref{eq:alpha-sub} admits the closed-form solution 
    \begin{equation}
    \label{eq: close-form-alpha}
        \hat{\alpha}(w)  = \frac{\pi^{k+1} \mathbf{1}_m + \frac{1}{c_k} \alpha^k +w\mu}{\sum_{ij} \pi_{i j}^{k+1} + \frac{1}{c_k} + w }.
    \end{equation}
    By substituting \eqref{eq: close-form-alpha} into $p$, $p(w)$ can be written as 
    \begin{equation*}
        p(w) = \sum_{i=1}^n \mu_i \log \left(\frac{\mu_i \left(\sum_{ij} \pi_{i j}^{k+1} + \frac{1}{c_k} + w\right)}{\left(\pi^{k+1} \mathbf{1}_m \right)_{i}+ \frac{1}{c_k} \alpha^k_i +w\mu_i}\right) - \rho_1.
    \end{equation*}
    Thus, $p$ is twice differentiable. The first-order derivative and second-order of $p$ are
    \begin{equation*}
        p'(w) = \sum_{i=1}^n \mu_i \frac{\left(\left(\pi^{k+1} \mathbf{1}_m\right)_i + \frac{1}{c_k} \alpha_i^k+ \mu_i w\right) - \mu_i\left(\sum_{i j} \pi^{k+1}_{i j} + \frac{1}{c_k} + w\right)}{\left(\sum_{i j} \pi^{k+1}_{i j} + \frac{1}{c_k} + w\right)\left(\left(\pi^{k+1} \mathbf{1}_m\right)_i + \frac{1}{c_k} \alpha_i^k+ \mu_i w\right)},
    \end{equation*}
    and
    \begin{equation*}
        p''(w) = -\sum_{i=1}^n \mu_i \frac{\left(\left(\pi^{k+1} \mathbf{1}_m\right)_i + \frac{1}{c_k} \alpha_i^k+ \mu_i w\right)^2 - \mu_i^2\left(\sum_{i j} \pi^{k+1}_{i j} + \frac{1}{c_k} + w\right)^2}{\left(\sum_{i j} \pi^{k+1}_{i j} + \frac{1}{c_k} + w\right)^2 \left(\left(\pi^{k+1} \mathbf{1}_m\right)_i + \frac{1}{c_k} \alpha_i^k+ \mu_i w\right)^2}.
    \end{equation*}
    Then we prove that $p'(w) \leq 0$ and $p''(w) \geq 0$ for $w \geq 0$, so $p$ is monotonically non-increasing and convex on $\mathbb{R}_{+}$.
    Let $s_i = \left(\pi^{k+1} \mathbf{1}_m\right)_i + \frac{1}{c_k} \alpha_i^k+ \mu_i w$ and $s = \left(\pi^{k+1} \mathbf{1}_m\right)_i + \frac{1}{c_k} \alpha_i^k+ \mu_i w$. Note that $s = \sum_{i=1}^n s_i$. Then $p'$ and $p''$ can be written as 
    \begin{equation*}
        p'(w) = \sum_{i=1}^n \mu_i \frac{s_i - \mu_i s}{s_i \cdot s} = \frac{1}{s}\left(1 - \sum_{i=1}^n \mu_i \frac{\mu_i s}{s_i} \right),
    \end{equation*}
    and 
    \begin{equation*}
        p''(w) = -\sum_{i=1}^n \mu_i \frac{s_i^2 - \mu_i^2 s^2}{s_i^2 \cdot s^2} = -\frac{1}{s^2} \left(1 - \sum_{i=1}^n \mu_i \frac{\mu_i^2 s^2}{s_i^2}\right).
    \end{equation*}
    Therefore, it is equivalent to show $\sum_{i=1}^n \mu_i \dfrac{\mu_i s}{s_i} \geq 1$ and $\sum_{i=1}^n \mu_i \dfrac{\mu_i^2 s^2}{s_i^2} \geq 1$.

    Recall that $\dfrac{1}{x}$ and $\dfrac{1}{x^2}$ are convex on $\mathbb{R}_{++}$, then
    \begin{equation*}
        \sum_{i=1}^n \mu_i \frac{\mu_i s}{s_i} =\sum_{i=1}^n \frac{1}{\mu_i\frac{s_i}{\mu_i s}} \geq \frac{1}{\sum_{i=1}^n\mu_i \frac{s_i}{\mu_i s}} = 1,
    \end{equation*}
    \begin{equation*}
        \sum_{i=1}^n \mu_i \frac{\mu_i^2 s^2}{s_i^2} =  \sum_{i=1}^n \mu_i \frac{1}{\left(\frac{s_i}{\mu_i s}\right)^2} \geq \frac{1}{\left(\sum_{i=1}^n \mu_i \frac{s_i}{\mu_i s}\right)^2} = 1.
    \end{equation*}
\end{proof}

\subsection{Proof of Proposition \ref{prop:convergence_newton}}
\begin{proof}
     First, prove that $p$ only has one root on $I$. Since $p$ is continuous on $I$ and there exists $\tilde{x}$, $\bar{x} \in I$ such that $p(\tilde{x}) > 0$ and $p(\bar{x}) < 0$, $p$ contains at least one root on $[\tilde{x}, \bar{x}]$. Since $p$ is non-increasing, $p$ cannot have roots outside $[\tilde{x}, \bar{x}]$. Suppose that $p$ have two different roots $z_1$ and $z_2$ on $[\tilde{x}, \bar{x}]$ and $z_1 < z_2$. By convexity of $p$, we have
     \[0 = p(z_2) = p\left(\frac{\bar{x}-z_2}{\bar{x} - z_1} z_1 + \frac{z_2-z_1}{\bar{x} - z_1}\bar{x}\right)\leq \frac{\bar{x}-z_2}{\bar{x}-z_1} p(z_1) + \frac{z_2-z_1}{\bar{x} - z_1} p(\bar{x}) = \frac{z_2-z_1}{\bar{x} - z_1} p(\bar{x})< 0.\]
    This is a contradiction. So $p$ has a unique root on $I$.

    $p'(x) \leq 0$ since $p$ is non-increasing on $I$. Denote the root of $p$ as $r$. Claim that $p'(x) < 0$ for $x\in [\tilde{x}, r]$. Otherwise, there exist $x \in [\tilde{x}, r]$ such that $p'(x) = 0$, then
    \[0 > p(\bar{x}) \geq p(x) + p'(x)(\bar{x} - x) = p(x) \geq 0,\]
    which leads to a contradiction. Especially, $p'(\tilde{x}) < 0$, and we can set $\tilde{x}$ as the initial point of Newton's method.
    
    The update of Newton's method is 
    \begin{equation*}
        x_{k+1} = x_k - \frac{p(x_k)}{p'(x_k)}.
    \end{equation*}
    Therefore, $x_{k+1} \geq x_k$ and $\{x_k\}_{k\geq 1}$ is an increasing sequence.
    Since $p$ is convex,
    \begin{equation*}
        p(x_{k+1}) \geq p(x_k) + p'(x_k) (x_{k+1}-x_k) = p(x_k) - p(x_k)=0.
    \end{equation*}
     $x_k \leq r$ because $p$ is a monotonically non-increasing function. $\{x_k\}_{k\geq 1}$ is an increasing sequence with an upper bound, so it has a limit $x^*$ and $\lim_{k \rightarrow \infty} (x_k - x_{k+1}) = 0$. Also, $p'$ is bounded on $[\tilde{x}, r]$ since it is continuous. Therefore,
    \begin{equation*}
        p(x^*) = \lim_{k\rightarrow + \infty} p(x_k) = \lim_{k\rightarrow + \infty} p'(x_k)(x_k - x_{k+1}) = 0.
    \end{equation*}
    Hence, the sequence generated by Newton's method converges to a root of $p$.
\end{proof}

\subsection{Proof of Theorem \ref{thm:convergence}}
\begin{assumption}
    The critical point set $\mathcal{X}$ is non-empty.
\end{assumption}

Before the proof of Theorem \ref{thm:convergence}, we first prove that sequence $\{\pi^k\}_{k\geq 0}$ generated by BPALM lies in a compact set.
\begin{proposition}
    Sequence $\{\pi^k\}_{k\geq 0}$ generated by BPALM lies in a compact set.
\end{proposition}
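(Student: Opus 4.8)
The plan is to prove that $\{\pi^k\}$ is bounded; since the feasible set $\{\pi\ge 0\}$ is closed, in the finite-dimensional space $\mathbb{R}^{n\times m}$ this is equivalent to the claimed compactness. The entire argument reduces to controlling the total mass $\|\pi^k\|_1=\sum_{ij}\pi^k_{ij}$, which I would extract from the marginal penalty term $\tau_1 d_{\textnormal{\bf KL}}(\pi^k\mathbf{1}_m,\alpha^k)$ appearing in the objective $F$.

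First I would establish the sufficient-decrease property of BPALM. Under the step-size assumptions of Theorem~\ref{thm:convergence}, in particular $t_k<\sigma/L_f$, the Bregman descent lemma applied to the linearization of $f$ used in the $\pi$-update~\eqref{eq:pi-update}, combined with the optimality of the $\alpha$- and $\beta$-updates~\eqref{eq:alpha-update}--\eqref{eq:beta-update}, yields the monotonicity
\[ F(\pi^{k+1},\alpha^{k+1},\beta^{k+1})\le F(\pi^k,\alpha^k,\beta^k)\le \cdots \le F(\pi^0,\alpha^0,\beta^0)=:F_0<\infty, \]
where finiteness of $F_0$ follows from a feasible initialization (e.g. $\pi^0>0$ with $\alpha^0,\beta^0$ in their feasible sets). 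Every summand of $F$ is nonnegative on the feasible set---$f(\pi)=\langle\mathcal{L}(D,\bar D)\otimes\pi,\pi\rangle\ge 0$ because $\mathcal{L}(D,\bar D)\ge 0$ entrywise and $\pi\ge 0$; $q,h_1,h_2$ vanish at feasible points; and the generalized KL divergence is nonnegative---so discarding all but one penalty term gives $\tau_1 d_{\textnormal{\bf KL}}(\pi^k\mathbf{1}_m,\alpha^k)\le F_0$ for every $k$.

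The heart of the proof is a coercivity estimate. Writing $A_k:=\|\pi^k\|_1=\|\pi^k\mathbf{1}_m\|_1$ and using $\|\alpha^k\|_1=1$ (since $\alpha^k\in\Delta^n$ for all $k$), the log-sum inequality gives
\[ d_{\textnormal{\bf KL}}(\pi^k\mathbf{1}_m,\alpha^k)\ \ge\ A_k\log A_k - A_k + 1, \]
where any mass of $\pi^k\mathbf{1}_m$ outside $\operatorname{supp}(\alpha^k)$ only enlarges the left-hand side. Hence $\tau_1(A_k\log A_k-A_k+1)\le F_0$; since $A\mapsto A\log A-A+1$ is coercive, its sublevel sets are bounded, so $A_k\le\bar A$ for a finite constant $\bar A$ independent of $k$. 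Consequently $\{\pi^k\}\subseteq\{\pi\ge 0:\|\pi\|_1\le\bar A\}$, a closed and bounded---hence compact---subset of $\mathbb{R}^{n\times m}$.

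I expect the main obstacle to be the descent step: one must verify that $f$ has Lipschitz gradient relative to the entropy Bregman kernel and that the bound $t_k<\sigma/L_f$ indeed guarantees a per-block decrease despite the linearization of $f$ in~\eqref{eq:pi-update}. Once monotonicity is secured, the remaining coercivity argument is routine, relying only on nonnegativity of the terms of $F$ and the fact that $\alpha^k$ never leaves the simplex.
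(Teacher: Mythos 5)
Your strategy---monotone decrease of $F$ plus coercivity of the marginal penalty in the total mass---contains a genuine circularity, and you have in fact put your finger on it yourself in your closing paragraph. The ``main obstacle'' you defer, namely that the linearized $\pi$-update \eqref{eq:pi-update} decreases $F$ when $t_k\le\sigma/L_f$, requires a descent inequality of the form $f(\pi^{k+1})\le f(\pi^k)+\langle\nabla f(\pi^k),\pi^{k+1}-\pi^k\rangle+\tfrac{L_f}{\sigma}\,d_{\textnormal{\bf KL}}(\pi^{k+1},\pi^k)$, i.e., relative smoothness of $f$ with respect to the entropy kernel, or equivalently the lower bound $d_{\textnormal{\bf KL}}(\pi^{k+1},\pi^k)\ge\tfrac{\sigma}{2}\|\pi^{k+1}-\pi^k\|_F^2$. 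The entropy $x\log x$ has second derivative $1/x$, so it is $\sigma$-strongly convex only where the entries are bounded above by $1/\sigma$; on all of $\{\pi\ge 0\}$ the inequality fails (take $\pi^k_{ij}=M$ and $\pi^{k+1}_{ij}=M+1$: the KL term is $O(1/M)$ while the squared norm is $1$). Likewise, relative smoothness of the indefinite quadratic $f$ with respect to entropy cannot hold on the whole orthant, since $\nabla^2 h(\pi)\to 0$ in directions where entries grow while $\nabla^2 f$ is a nonzero constant that is not negative semidefinite. So the monotonicity you invoke in your first step is only available \emph{after} boundedness of $\{\pi^k\}$ is known---which is precisely the statement to be proven. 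This is also why the paper establishes this proposition \emph{before} the sufficient-decrease estimate in the proof of Theorem~\ref{thm:convergence}, whose step $(\diamondsuit)$ silently uses $\sigma$-strong convexity of $x\log x$ on the bounded set produced here.

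The paper sidesteps the descent machinery entirely and argues by induction directly on the optimality of the subproblem \eqref{eq:pi-update}: starting from $\pi^0\in[0,1]^{n\times m}$, if some entry $\pi^{k+1}_{ij}$ exceeded $1$, then replacing that single entry by $\max\{\alpha_i,\beta_j,\pi^k_{ij}\}\le 1$ would strictly decrease the subproblem objective---the linear term has nonnegative coefficients, and each KL summand $x\mapsto x\log(x/a)-x+a$ is unimodal with minimum at $a\le 1$---contradicting optimality of $\pi^{k+1}$; hence $\pi^{k+1}\in[0,1]^{n\times m}$. Your coercivity computation via the log-sum inequality is sound as far as it goes (nonnegativity of every term of $F$, $\alpha^k\in\Delta^n$, and coercivity of $A\mapsto A\log A-A+1$ are all correct), but it cannot serve as the first brick in the wall: to keep your structure you would need a per-iteration decrease argument that does not presuppose boundedness, and for this linearized update with an entropy kernel no such argument is available.
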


\begin{proof}
    We prove that $\{\pi^k\}_{k\geq 0}$ lies in the compact set $\mathcal{A} \coloneqq\{\pi \in \mathbb{R}^{n\times m}: 0\leq\pi_{i j} \leq 1\}$ by mathematical induction. For $k = 0$, we can initialize $\pi^0$ with $0\leq\pi_{i j}^0 \leq 1$. Suppose that $\pi^k \in \mathcal{A}$ and $\pi^{k+1} \notin \mathcal{A}$. Then there exist $i \in [n]$ and $j \in [m]$ such that $\pi^{k+1}_{i j} > 1$. Recall that $\pi^{k+1}$ is the optimal solution to the problem
    \begin{equation}
        \min_{\pi\geq 0} \ \varphi(\pi) \coloneqq \langle \mathcal{L}(D,\bar{D})\otimes \pi^k, \pi \rangle + \tau_1 d_{\textnormal{\bf KL}}(\pi \mathbf{1}_m, \alpha^k) + \tau_2 d_{\textnormal{\bf KL}}(\pi^T \mathbf{1}_n, \beta^k) + \frac{1}{t_k} d_{\textnormal{\bf KL}}(\pi, \pi^{k}),
        \label{eq:pi-sub}
    \end{equation}
    Observe that function $\phi(x) \coloneqq x\log \dfrac{x}{a} - x + a$ is a unimodal function on $\mathbb{R}_{+}$ and achieve its minimum at $x=a$. Since $\alpha_i$, $\beta_j$, and $\pi_{i j}^k$ are smaller than or equal to 1, $\alpha_i$, $\beta_j$, and $\pi_{i j}^k$ are strictly smaller than $\pi_{i j}^{k+1}$. Let $\tilde{\pi} \in \mathbb{R}^{n\times m}$,
    \begin{equation*}
    \tilde{\pi}_{k l} = 
        \left\{
        \begin{aligned}
             &\max\{\alpha_i, \beta_j, \pi_{i j}^k\}, \quad (k,l) = (i,j), \\
             & \pi_{k l}^{k+1}, \quad \text{otherwise}.
        \end{aligned}
        \right.
    \end{equation*}
    Then $\varphi(\tilde{\pi}) < \varphi(\pi^k)$, this contradicts to $\pi^{k+1}$ is the optimal solution to problem \eqref{eq:pi-sub}. Thus, $\pi^{k+1} \in \mathcal{A}$.
\end{proof}

For the proof of Theorem \ref{thm:convergence}, we first prove the sufficient decrease property of BPALM, i.e., there exist a constant $\kappa_{1}>0$ and an index $k_{1} \geq 0$ such that for $k \geq k_{1}$,
        \[
        F\left(\pi^{k+1}, \alpha^{k+1}, \beta^{k+1} \right)-F\left(\pi^{k}, \alpha^{k},\beta^{k}\right) \leq-\kappa_{1}\left(\left\|\pi^{k+1}-\pi^{k}\right\|_F^{2} + \left\|\alpha^{k+1}-\alpha^{k}\right\|_2^2 + \left\|\beta^{k+1}-\beta^{k}\right\|_2^2\right). 
        \]
And then we prove the subsequence convergence result.
\begin{proof}
    It is worth noting that $f(\pi)$ is a quadratic function, i.e., $f(\pi) = \left\langle \mathcal{L}(D,\bar{D})\otimes \pi, \pi \right\rangle$, then $f(\pi)$ is gradient Lipschitz continuous with the constant $\max_{i,j} \left(\sum_{k,l} \mathcal{L}(D,\bar{D})_{i,j,k,l}^2 \right)^{1/2}$. To simplify the notation, let $L_f = \max_{i,j} \left(\sum_{k,l} \mathcal{L}(D,\bar{D})_{i,j,k,l}^2 \right)^{1/2}$. 
    \begin{align*}
        & F\left(\pi^{k+1},\alpha^{k+1},\beta^{k+1}\right) - F\left(\pi^{k},\alpha^{k},\beta^{k}\right) \\
        \leq \ & \left\langle \nabla f(\pi^k), \pi^{k+1} - \pi^k \right\rangle + \frac{L_f}{2} \left\|\pi^{k+1}-\pi^k\right\|_F^2 + q\left(\pi^{k+1}\right) + g_1\left(\pi^{k+1},\alpha^{k+1}\right) + g_2\left(\pi^{k+1},\beta^{k+1}\right) +  \\
        & h_1\left(\alpha^{k+1}\right) + h_2\left(\beta^{k+1}\right) -\left(q\left(\pi^k\right) + g_1\left(\pi^k,\alpha^k\right) + g_2\left(\pi^k,\beta^k\right)+h_1\left(\alpha^k\right)+h_2\left(\beta^k\right)\right) \\
        \stackrel{(\diamondsuit)}{\le} \ & \left\langle \nabla f(\pi^k), \pi^{k+1} - \pi^k \right\rangle + \frac{L_f}{\sigma} d_{\textnormal{\bf KL}}\left(\pi^{k+1},\pi^k) + q\left(\pi^{k+1}\right) + g_1(\pi^{k+1},\alpha^{k+1}\right) + g_2\left(\pi^{k+1},\beta^{k+1}\right) +  \\
        & h_1\left(\alpha^{k+1}\right) + h_2\left(\beta^{k+1}\right) -\left(q\left(\pi^k\right)+g_1\left(\pi^k,\alpha^k\right) + g_2\left(\pi^k,\beta^k\right)+h_1\left(\alpha^k\right)+h_2\left(\beta^k\right)\right) \\
        = \ & \left\langle \nabla f(\pi^k), \pi^{k+1}\right\rangle + q\left(\pi^{k+1}\right)+ g_1\left(\pi^{k+1},\alpha^k\right) + g_2\left(\pi^{k+1},\beta^k\right) + \frac{1}{t_k} d_{\textnormal{\bf KL}}\left(\pi^{k+1},\pi^k\right) -  \\
        & \left\langle \nabla f(\pi^k), \pi^{k}\right\rangle - q\left(\pi^k\right) - g_1\left(\pi^{k},\alpha^k\right) - g_2\left(\pi^{k},\beta^k\right) + g_1\left(\pi^{k+1},\alpha^{k+1}\right) + h_1\left(\alpha^{k+1}\right) + \frac{1}{c_k} d_{\textnormal{\bf KL}}\left(\alpha^k, \alpha^{k+1}\right) -  \\
        & g_1\left(\pi^k, \alpha^k\right)- h_1\left(\alpha^k\right) + g_2\left(\pi^{k+1},\beta^{k+1}\right) + h_2\left(\beta^{k+1}\right) + \frac{1}{r_k} d_{\textnormal{\bf KL}}\left(\beta^k, \beta^{k+1}\right) - g_2\left(\pi^k, \beta^k\right)-h_2\left(\beta^k\right) - \\
        & \left(\frac{1}{t_k}-\frac{L_f}{\sigma}\right)d_{\textnormal{\bf KL}}\left(\pi^{k+1},\pi^k\right) - \frac{1}{c_k}d_{\textnormal{\bf KL}}\left(\alpha^k,\alpha^{k+1}\right) - \frac{1}{r_k}d_{\textnormal{\bf KL}}\left(\beta^k,\beta^{k+1}\right) \\
        \leq \ & -\left(\frac{1}{t_k}-\frac{L_f}{\sigma}\right)d_{\textnormal{\bf KL}}\left(\pi^{k+1},\pi^k\right) - \frac{1}{c_k}d_{\textnormal{\bf KL}}\left(\alpha^k,\alpha^{k+1}\right) - \frac{1}{r_k}d_{\textnormal{\bf KL}}\left(\beta^k,\beta^{k+1}\right). \\
        \stackrel{(\diamondsuit)}{\le} \ & -\frac{\sigma}{2}\left(\left(\frac{1}{t_k}-\frac{L_f}{\sigma}\right) 
        \left\|\pi^{k+1}-\pi^k \right\|_F^2 + \frac{1}{c_k}\left\|\alpha^{k+1}-\alpha^k\right\|_2^2 + \frac{1}{r_k}\left\|\beta^{k+1}-\beta^k\right\|_2^2\right).
    \end{align*}
       $(\diamondsuit)$ is because as $x\log x$ is $\sigma$-strongly convex, we have 
    \[d_{\textnormal{\bf KL}}(\pi^{k+1}, \pi^k) \geq \frac{\sigma}{2}\|\pi^{k+1}-\pi^k\|_F^2, \quad d_{\textnormal{\bf KL}}(\alpha^{k}, \alpha^{k+1}) \geq \frac{\sigma}{2}\|\alpha^{k+1}-\alpha^k\|_2^2, \quad d_{\textnormal{\bf KL}}(\beta^{k}, \beta^{k+1}) \geq \frac{\sigma}{2}\|\beta^{k+1}-\beta^k\|_2^2.\]
    By letting $\kappa_1 = \dfrac{\sigma}{2}\min\left(\left(\dfrac{1}{t_k}-\dfrac{L_f}{\sigma}\right), \dfrac{1}{\bar{r}}\right) > 0$, we get 
    \begin{equation}
        F\left(\pi^{k+1}, \alpha^{k+1}, \beta^{k+1} \right)-F\left(\pi^{k}, \alpha^{k},\beta^{k}\right) \leq-\kappa_{1}\left(\left\|\pi^{k+1}-\pi^{k}\right\|_F^{2} + \left\|\alpha^{k+1}-\alpha^{k}\right\|_2^2 + \left\|\beta^{k+1}-\beta^{k}\right\|_2^2\right). 
        \label{eq:sufficient_decrease}
    \end{equation}
     Summing up \eqref{eq:sufficient_decrease} from $k = 0$ to $+\infty$, we obtain
     \begin{equation*}
         \kappa_1 \sum_{k=0}^\infty \left(\left\|\pi^{k+1}-\pi^{k}\right\|_F^{2} + \left\|\alpha^{k+1}-\alpha^{k}\right\|_2^2 + \left\|\beta^{k+1}-\beta^{k}\right\|_2^2\right) \leq F(\pi^0,\alpha^0,\beta^0) - F(\pi^\infty,\alpha^\infty,\beta^\infty).
     \end{equation*}
     As $F$ is coercive and $\left\{\left(\pi^{k}, \alpha^{k},\beta^{k}\right)\right\}$ is a bounded sequence, it follows that the left-hand side is bounded. This implies 
     \begin{equation*}
         \sum_{k=0}^\infty \left(\left\|\pi^{k+1}-\pi^{k}\right\|_F^{2} + \left\|\alpha^{k+1}-\alpha^{k}\right\|_2^2 + \left\|\beta^{k+1}-\beta^{k}\right\|_2^2\right) < + \infty,
     \end{equation*}
     and
     \begin{equation*}
         \lim_{k \rightarrow +\infty} (\|\pi^{k+1}-\pi^k\|_F + \|\alpha^{k+1}-\alpha^k\|_2 + \|\beta^{k+1}-\beta^k\|_2) = 0.
     \end{equation*}
    Let $l(x) = \sum_{i} x_i \log x_i$. Recall the optimality condition of BPALM, we have
    \begin{align}
        \label{eq:opt_condition_bpalm_1}
        0 & \in \nabla f\left(\pi^{k+1}\right) + \partial q\left(\pi^{k+1}\right) + \nabla_{\pi} g_1\left(\pi^{k+1},\alpha^k\right) + \nabla_{\pi} g_2\left(\pi^{k+1},\beta^k\right) + \frac{1}{t_k} \left(\nabla l\left(\pi^{k+1}\right)-\nabla l\left(\pi^k\right)\right), \\
        \label{eq:opt_condition_bpalm_2}
        0 & \in \nabla_{\alpha} g_1\left(\pi^{k+1},\alpha^{k+1}\right) + \partial h_1\left(\alpha^{k+1}\right) + \frac{1}{c_k} \nabla^2 l(\alpha^{k+1})(\alpha^{k+1}-\alpha^k), \\
        0 & \in \nabla_\beta g_2\left(\pi^{k+1},\beta^{k+1}\right) + \partial h_2\left(\beta^{k+1}\right) + \frac{1}{r_k} \nabla^2 l(\beta^{k+1})(\beta^{k+1}-\beta^k).
        \label{eq:opt_condition_bpalm_3}
    \end{align}
   Let $\left(\pi^\infty, \alpha^\infty, \beta^\infty \right)$ be a limit point of the sequence $\left\{(\pi^k, \alpha^k, \beta^k)\right\}_{k \geq 0}$. Then, there exists a sequence $\{n_k\}_{k\geq 0}$ such that $\left\{(\pi^{n_k}, \alpha^{n_k}, \beta^{n_k})\right\}_{k \geq 0}$ converges to $\left(\pi^\infty, \alpha^\infty, \beta^\infty\right)$. Since we assume that $h$ is twice continuous differentiable and $\alpha^k$ and $\beta^k$ are in a compact set, then $\nabla^2 l(\alpha^{k})$ and $\nabla^2 l(\beta^k)$ are bounded. Therefore, $\lim_{k\rightarrow \infty} \nabla^2 l(\alpha^{k+1})(\alpha^{k+1}-\alpha^k) = 0$ and $\lim_{k\rightarrow \infty} \nabla^2 l(\beta^{k+1})(\beta^{k+1}-\beta^k)=0$. 
   Replacing the $k$ by $n_k$ in \eqref{eq:opt_condition_bpalm_1}, \eqref{eq:opt_condition_bpalm_2}, and \eqref{eq:opt_condition_bpalm_3}, taking limits on both sides as $k \rightarrow \infty$, we obtain that 
    \begin{align*}
        0 & \in \nabla f\left(\pi^\infty\right) + \partial q(\pi^\infty) + \nabla_{\pi} g_1\left(\pi^\infty,\alpha^\infty\right) + \nabla_{\pi} g_2\left(\pi^\infty,\beta^\infty\right), \\
        0 & \in \nabla_{\alpha} g_1\left(\pi^\infty,\alpha^\infty\right) + \partial h_1\left(\alpha^\infty\right),  \\
        0 & \in \nabla_\beta g_2\left(\pi^\infty,\beta^\infty\right) + \partial h_2\left(\beta^\infty\right).
    \end{align*}
     Thus $\left(\pi^\infty, \alpha^\infty, \beta^\infty \right)$ belongs to $\mathcal{X}$.
    \end{proof}

    \subsection{Discussion of Computational Complexity of PGW, UGW, and RGW}
   We consider the measure $\min_{0 \leq k\leq K} (\|\pi^{k+1}-\pi^{k}\|_F + \|\alpha^{k+1}-\alpha^{k}\|_2 +\|\beta^{k+1}-\beta^{k}\|_2)$ as the stationary measure, and it is observed that the convergence rate of our algorithm is $\mathcal{O}(\frac{1}{\sqrt{K}})$. Similarly, the Frank-Wolfe algorithm for PGW also exhibits a convergence rate of $\mathcal{O}(\frac{1}{\sqrt{K}})$. The literature on UGW does not provide a discussion on the convergence rate of alternate Sinkhorn minimization for UGW. In each iteration of PGW, UGW, and RGW, the computation of $\mathcal{L}(D,\bar{D}) \otimes \pi^k$ is a crucial step. According to \cite{peyre2016gromov}, the complexity of this computation is $\mathcal{O}(n^2m + m^2n)$. Additionally, PGW involves utilizing the network simplex algorithm to solve a linear programming problem as a subroutine, which has a complexity of $\mathcal{O}((n^2m + m^2n) \log^2(n+m))$. On the other hand, both UGW and RGW utilize the sinkhorn algorithm to solve an entropic unbalanced optimal transport problem. The complexity of the sinkhorn algorithm for unbalanced OT is $\mathcal{O}((n^2+m^2)/(\varepsilon \log(\varepsilon)))$ for computing an $\varepsilon$-approximation.

\section{Additional Experiment Results}
\label{Appendix:experiment}

\subsection{Additional Experiment Results on Subgraph Alignment}
\label{Appendix:experiment_subgraph}

\paragraph{Source codes of all baselines used in this paper:}
\begin{itemize}
    \item FW \citep{flamary2021pot}: \url{https://github.com/PythonOT/POT}
    \item SpecGW \citep{chowdhury2021generalized}:  \url{https://github.com/trneedham/Spectral-Gromov-Wasserstein}
    \item eBPG \citep{flamary2021pot}: \url{https://github.com/PythonOT/POT}
    \item BAPG \citep{li2023a}:  \url{https://github.com/squareRoot3/Gromov-Wasserstein-for-Graph}
    \item UGW \citep{sejourne2021unbalanced}: \url{https://github.com/thibsej/unbalanced\_gromov\_wasserstein}
    \item PGW \citep{chapel2020partial,flamary2021pot}: \url{https://github.com/PythonOT/POT}
    \item srGW: \citep{vincent2022semi}: \url{https://github.com/cedricvincentcuaz/srGW}
    \item RGWD: \cite{liu2022robust}: \url{https://github.com/cxxszz/rgdl}
\end{itemize}

\paragraph{Results in Figure~\ref{fig:sub_align_result_2}}
The data utilized to create Figure~\ref{fig:sub_align_result_2} is provided in Table~\ref{tab:sub_align_result_2} and Table~\ref{tab:sub_align_result_3}.
\begin{table*}[!htbp]
\caption{Comparison of the average matching accuracy (\%) and wall-clock time (seconds) on subgraph alignment of 30\% subgraph and 20\% subgraph.}
\centering
\resizebox{\linewidth}{!}{
\begin{tabular}{c|cr|cr|cr|cr|cr|cr}
\toprule
 & \multicolumn{6}{c|}{30\% subgraph} & \multicolumn{6}{c}{20\% subgraph} \\ \midrule
\multirow{2}{*}{Method} &\multicolumn{2}{c|}{Synthetic} & \multicolumn{2}{c|}{Proteins}  & \multicolumn{2}{c|}{Enzymes} &\multicolumn{2}{c|}{Synthetic} & \multicolumn{2}{c|}{Proteins }  & \multicolumn{2}{c}{Enzymes} \\
& Acc & Time & Acc & Time & Acc & Time  & Acc & Time & Acc & Time & Acc & Time \\\midrule
FW &  2.22 & 17.06 & 12.96 & 14.83 & 12.08 & 5.37 & 2.24 & 6.65 & 10.83 & 11.34 & 9.53 & 4.92\\
SpecGW & 1.38 &  2.24 & 10.64 & 11.54 & 9.41  & 3.74 & 1.71 & 2.21 &  10.78 & 10.15 & 8.35 & 3.21\\
eBPG & 0.65&  0.49 & 8.12 & 1022.02 & 3.84 & 476.83 & 1.17 & 0.42 & 7.23 & 545.50 & 2.66 & 94.78\\
BPG  & 1.86 & 17.53 & 17.89 & 86.85 & 17.69 & 52.89 & 1.64 & 11.66 & 12.99 & 55.47 & 14.35 & 32.89\\
BAPG  & 2.94 & 35.90 & 18.79 & 36.02 & 16.85  & 10.88 & 3.80 & 23.29 & 14.07 & 23.92 & 11.22 & 8.38 \\\midrule
srGW & 3.17 & 86.38 & 22.75 & 89.14 & 27.45 & 41.18 & 5.49 & 88.89 & 18.38 & 17.72 & 23.13 & 17.11 \\
RGWD & 1.94 & 933.25 & 16.90 & 3674.98 & 16.34 & 3322.16 & 1.94 & 933.25 & 16.90 & 3674.98 & 16.34 & 3322.16\\
UGW & 35.15 & 168.03 & 14.32 & 10298 & 10.91 & 5552.27 & 4.48 & 251.41 & 11.75 & 7813.96 & 10.40 & 4019.62\\
PGW &  2.06 & 339.23 & 11.68 & 507.11 & 11.77 & 174.26 & 1.90 & 227.87 & 9.34 & 365.88 & 7.97 & 165.27\\ \midrule
RGW & \textbf{52.35} & 679.00 & \textbf{30.17} & 947.48 & \textbf{37.12} & 538.04 & \textbf{11.58} & 229.05 & \textbf{23.51} & 546.15 & \textbf{25.39} & 879.93
\\\bottomrule
\end{tabular}
}
\label{tab:sub_align_result_2} 
\vspace{-2mm}
\end{table*}

\begin{table}[!htbp]
\caption{Comparison of the average matching accuracy (\%) and wall-clock time (seconds) on subgraph alignment of 40\% subgraph.}
\centering
\small
\begin{tabular}{c|cr|cr|cr}
\toprule
 & \multicolumn{6}{c}{40\% subgraph} \\ \midrule
\multirow{2}{*}{Method} &\multicolumn{2}{c|}{Synthetic} & \multicolumn{2}{c|}{Proteins}  & \multicolumn{2}{c}{Enzymes} \\
& Acc & Time & Acc & Time & Acc & Time  \\\midrule
FW &  1.84 & 17.96 & 15.34 & 19.64 & 14.22 & 6.36 \\
SpecGW & 1.72 &  3.25 & 11.21 & 12.17 & 9.59  & 3.88 \\
eBPG & 0.38&  0.51 & 12.16 & 1628.38 & 9.96 & 943.49 \\
BPG  & 3.41 & 18.61 & 24.31 & 108.10 & 24.58 & 62.81 \\
BAPG  & 7.61 & 22.55 & 23.78 & 36.81 & 24.82  & 11.13 \\\midrule
srGW & 2.45 & 120.12 & 22.58 & 74.58 & 27.02 & 32.14\\
RGWD& 3.48 & 930.84 & 22.73 & 4490.60 & 22.63 & 3205.03\\
UGW & 79.61 & 960.04 & 21.22 & 11398 & 22.26 & 5589.73 \\
PGW &  2.17 & 483.10 & 11.95 & 491.64 & 9.51 & 182.58 \\ \midrule
RGW & \textbf{90.79} & 662.15 & \textbf{38.94} & 769.25 & \textbf{48.11} & 291.74
\\\bottomrule
\end{tabular}
\label{tab:sub_align_result_3} 
\vspace{-2mm}
\end{table}

\paragraph{Selection of Stepsize $t_k$, $c_k$, and $r_k$} In the subgraph alignment task, we have used constant values for the stepsizes $t_k$, $c_k$ and $r_k$. We have conducted a sensitivity analysis for these parameters, and the details are summarized in Tables~\ref{tab:sensitivity_stepsize_t} and \ref{tab:sensitivity_stepsize_c}. Specifically, Table~\ref{tab:sensitivity_stepsize_t} reveals that RGW achieves its highest accuracy with $t$ in the range of 0.01 to 0.05, allowing us to select $t=0.01$ as the default. Table~\ref{tab:sensitivity_stepsize_c} further indicates that accuracy is not significantly affected by variations in $c$, leading us to set $c=0.1$ as the default.
\begin{table}[!htbp]
    \caption{The performance of RGW with different stepsize $t$ on three subgraph alignment databases.}
    \centering
    \small
    \begin{tabular}{c|rr|rr|rr}
    \toprule
    \multirow{2}{*}{RGW} &\multicolumn{2}{c|}{Synthetic} & \multicolumn{2}{c|}{Proteins-1}  & \multicolumn{2}{c}{Enzymes}  \\
                & Acc & Time & Acc & Time & Acc & Time  \\ \midrule
    $t$ = 0.01 &  94.64 & 541.50 & 53.07 & 567.09 & 63.23 & 139.82\\
    $t$ = 0.05 &  90.49 & 288.25 & 53.37 & 1030.15 & 63.69 & 304.11\\
    $t$ = 0.1 &   87.00 & 233.65 & 53.69 & 713.07 & 62.38 & 506.83\\
    $t$ = 0.5&    87.09 & 779.84 & 51.56 & 1797.13 & 60.07 & 822.23\\
    $t$ = 1.0&    71.10 & 491.93 & 50.21 & 3071.27 & 58.31 & 1166.88
    \\\bottomrule
    \end{tabular}
    \label{tab:sensitivity_stepsize_t} 
    \end{table}
\begin{table}[!htbp]
    \caption{The performance of RGW with different stepsize $c$ on three subgraph alignment databases.}
    \centering
    \small
    \begin{tabular}{c|rr|rr|rr}
    \toprule
    \multirow{2}{*}{RGW} &\multicolumn{2}{c|}{Synthetic} & \multicolumn{2}{c|}{Proteins-1}  & \multicolumn{2}{c}{Enzymes}  \\
                & Acc & Time & Acc & Time & Acc & Time  \\ \midrule
    $c$ = 0.01 &  90.39 & 481.06 & 53.17 & 459.43 & 63.06 & 163.45\\
    $c$ = 0.05 &  90.39 & 967.98 & 53.12 & 919.18 & 62.86 & 327.44\\
    $c$ = 0.1 &   90.39 & 1460.94 & 53.37 & 1374.43 & 63.69 & 490.91\\
    $c$ = 0.5&    90.34 & 1969.18 & 53.35 & 1822.70 & 63.36 & 650.25\\
    $c$ = 1.0&    90.28 & 2488.17 & 53.43 & 2262.07 & 63.53 & 807.57
    \\\bottomrule
    \end{tabular}
    \label{tab:sensitivity_stepsize_c} 
\end{table}

\paragraph{Experiment Results of Normalized Degree as Marginal Distribution}
In addition to employing the uniform distribution as node distribution, we also explore the use of normalized degrees as node distribution. The results presented in Table~\ref{tab:normalized-degree-50} confirm that RGW surpasses other methods in terms of accuracy when utilizing normalized degrees as marginal distributions.
\begin{table}[!htbp]
\small
\centering
\caption{Subgraph alignment results (Acc.) of 50\% subgraph of compared GW-based methods using normalized degree.}
\begin{tabular}{c|cccc}
\toprule
Method   & Synthetic   & Proteins-1         & Proteins-2                & Enzymes   \\
\midrule
FW      & 2.96                & 14.82           & 42.16               & 15.90         \\
SpecGW  & 1.57                & 8.92            & 43.10               & 12.07   \\
eBPG     & 5.27               & 13.77           & 31.90              & 13.51  \\
BPG      & 12.52              & 20.88           & 57.77               & 29.42  \\
BAPG    & 74.39               & 24.65           & 63.26               & 31.92\\\midrule
srGW  & 4.22 & 13.67 & 12.39 & 23.05\\
RGWD & 13.54 & 25.92 & 57.30& 28.12\\
UGW & 99.56 & 25.51 & 62.92 & 39.71\\
PGW  & 3.97 & 11.59 & 37.79 & 13.08\\ \midrule
RGW & \textbf{99.61} & \textbf{50.61} & \textbf{66.09} & \textbf{63.59} \\
\bottomrule
\end{tabular}
\label{tab:normalized-degree-50}
\end{table}

\paragraph{Experiment Results of Adding Noise to Query Graph}We conducted an experiment by adding 10\% pseudo edges to the subgraphs, and the results can be found in Table~\ref{tab:noise-edge}. These findings demonstrate that RGW significantly outperforms other methods on the Enzymes and Proteins datasets.
\begin{table}[!htbp]
\small
\centering
\caption{Subgraph alignment results (Mean $\pm$ Std.) of 50\% subgraph  in 5 independent trials over different random seeds in the noise generating process.}
\begin{tabular}{c|ccc}
\toprule
Method   & Synthetic                             & Proteins-1                         & Enzymes   \\
\midrule
FW & 2.38$\pm$0.27             & 10.21$\pm$1.22                         & 12.58$\pm$12.58         \\
SpecGW  & 1.79$\pm$0.26        & 9.91$\pm$0.56                         & 10.00$\pm$0.81   \\
eBPG     & 6.84$\pm$1.89       & 15.63$\pm$0.73                           & 14.31$\pm$0.77  \\
BPG      & 17.71$\pm$2.23      & 20.96$\pm$1.57                           & 22.29$\pm$1.19  \\
BAPG & 43.39 $\pm$ 4.84        & 23.08$\pm$1.12                        & 24.42$\pm$ 1.54\\\midrule
srGW & 1.75$\pm$ 0.18          & 13.34$\pm$0.56                       & 20.03$\pm$ 0.73\\
RGWD & 17.30$\pm$ 1.90         & 22.70$\pm$0.33                      & 24.07$\pm$ 0.44\\
UGW  & 81.24$\pm$ 2.34         & 22.99$\pm$0.16                 & 26.68$\pm$ 1.54\\
PGW  & 1.67 $\pm$ 0.43         & 8.19$\pm$0.92                  & 7.33$\pm$ 0.45\\ \midrule
RGW & \textbf{88.79} $\pm$ 1.59 & \textbf{38.88}$\pm$1.31 & \textbf{49.01}$\pm$ 0.99\\
\bottomrule
\end{tabular}
\label{tab:noise-edge}
\end{table}

\subsection{Additional Experiment Results on Partial Shape Correspondence}

\paragraph{Convergence of BPALM} 
The convergence results for the proposed BPALM using various step sizes are presented in Figure~\ref{fig:convergence-stepsize} for the toy example discussed in Section~\ref{subsec:partial}.
\begin{figure}[!htbp]
    \centering
    \includegraphics[width=0.7\textwidth]{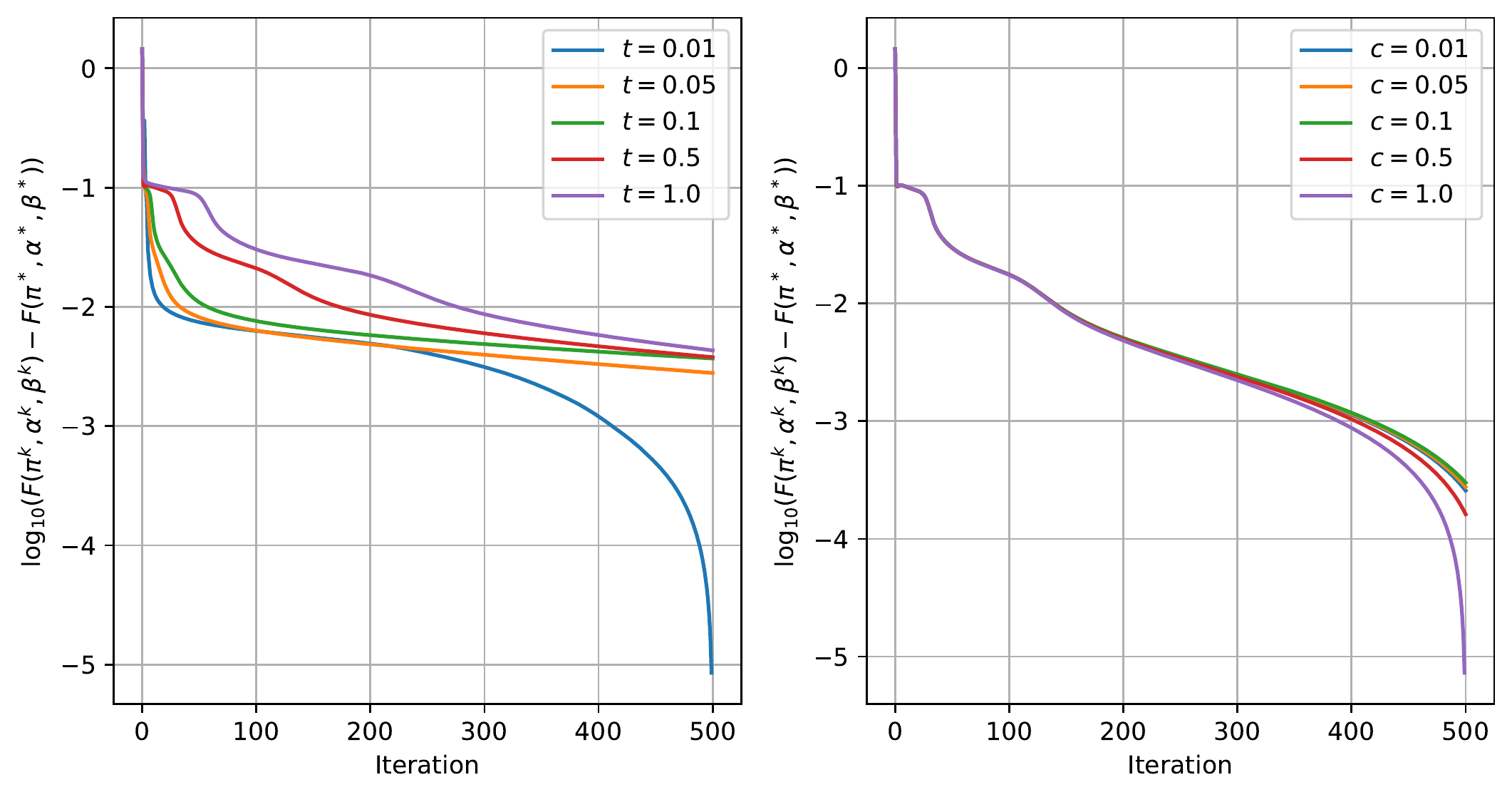}
    \caption{Convergence result of toy example with different stepsizes}
    \label{fig:convergence-stepsize}
\end{figure}

\paragraph{Additional Experiment Result on TOSCA Dataset} Additional experiment results on TOSCA Dataset are shown in Figure~\ref{fig:example_cat}, \ref{fig:example_centaur}, and \ref{fig:example_horse}.
\begin{figure}[!htbp]
    \centering
    \includegraphics[width=1.0\textwidth]{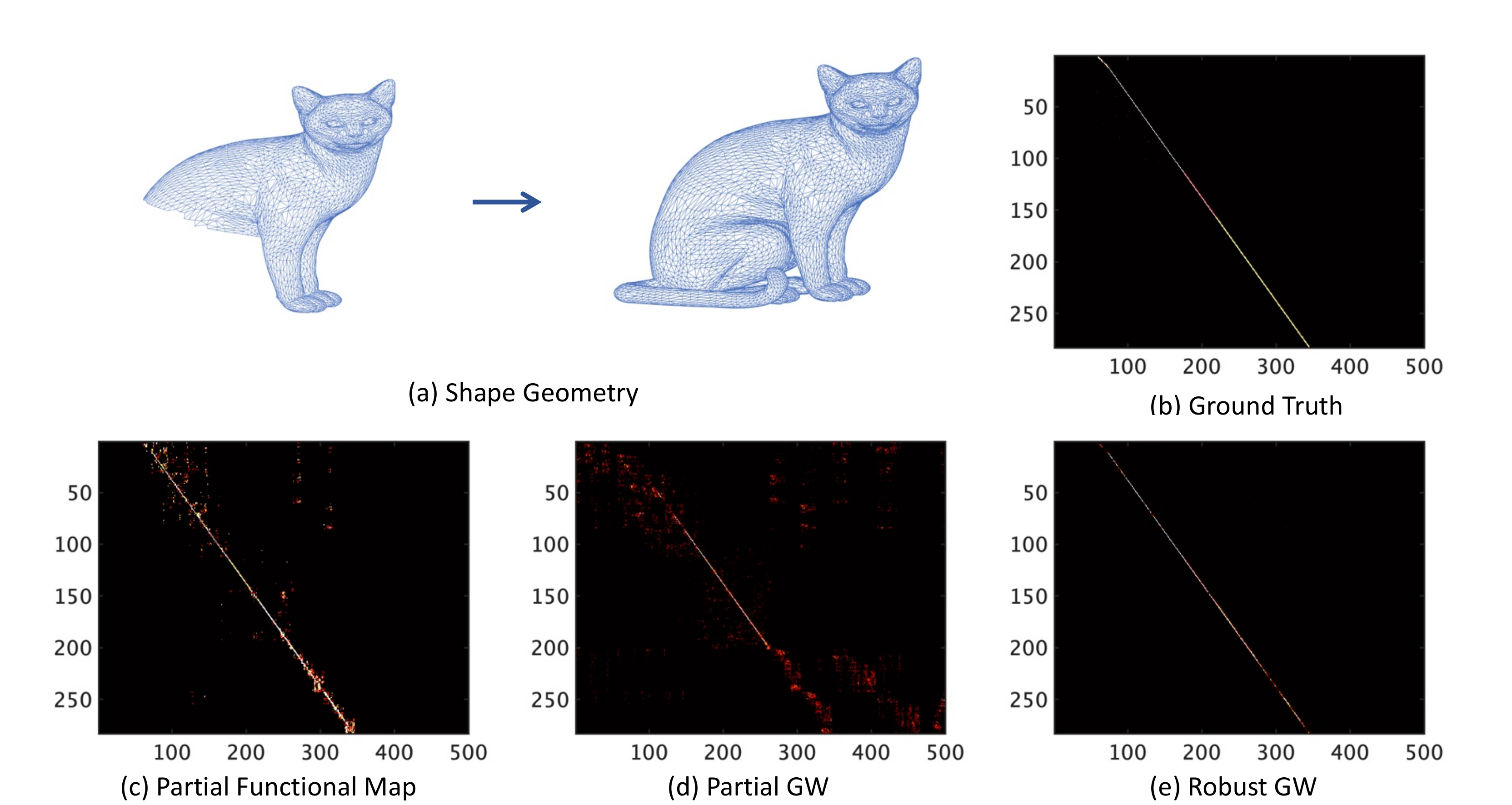}
    \caption{(a): 3D shape geometry of the source and target; (b)-(e): visualization of ground truth, initial point obtained from the partial functional map, and the matching results of PGW and RGW.}
    \label{fig:example_cat}
\end{figure}

\begin{figure}[!htbp]
    \centering
    \includegraphics[width=1.0\textwidth]{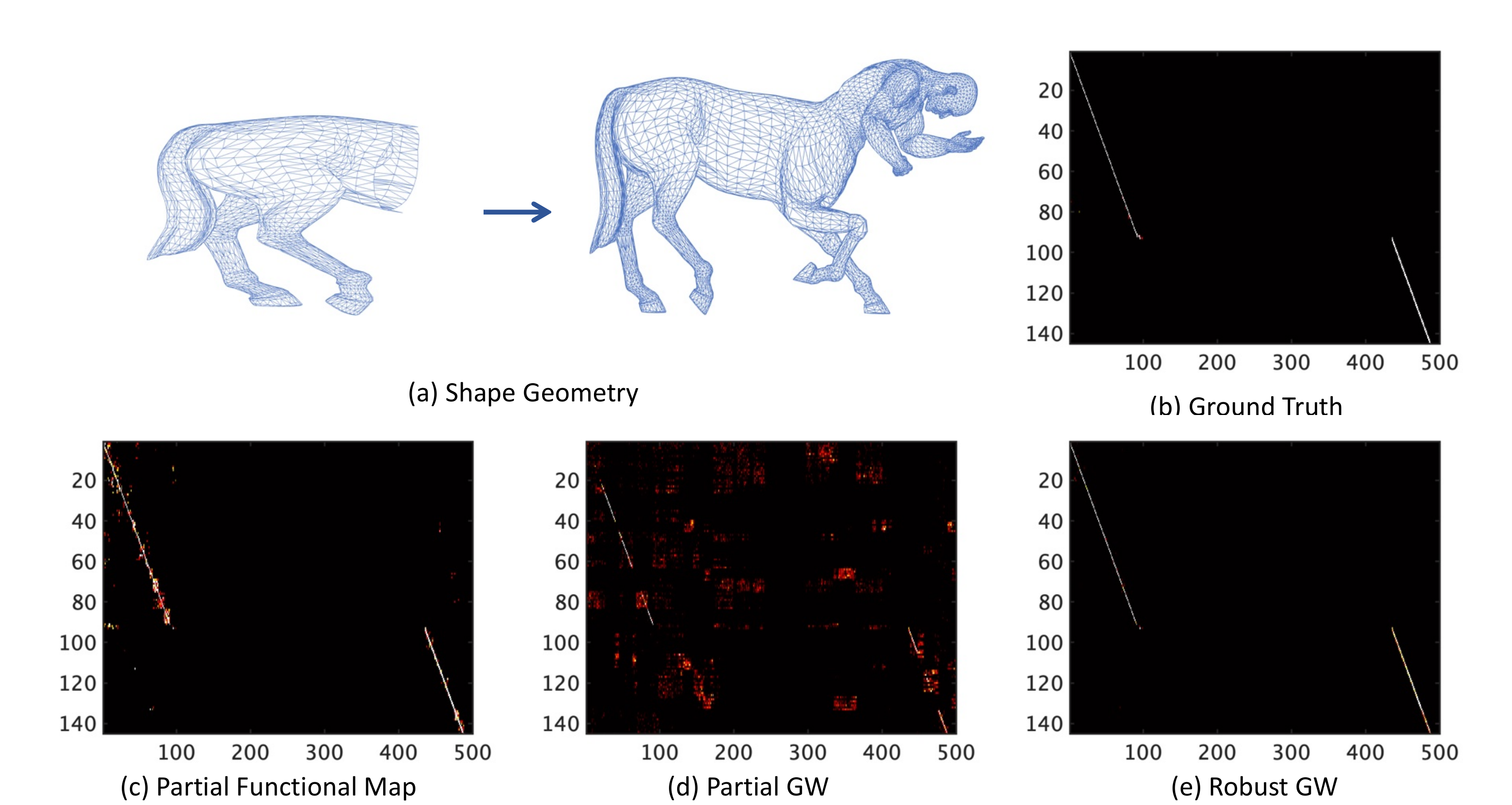}
    \caption{(a): 3D shape geometry of the source and target; (b)-(e): visualization of ground truth, initial point obtained from the partial functional map, and the matching results of PGW and RGW.}
    \label{fig:example_centaur}
\end{figure}

\begin{figure}[!htbp]
    \centering
    \includegraphics[width=1.0\textwidth]{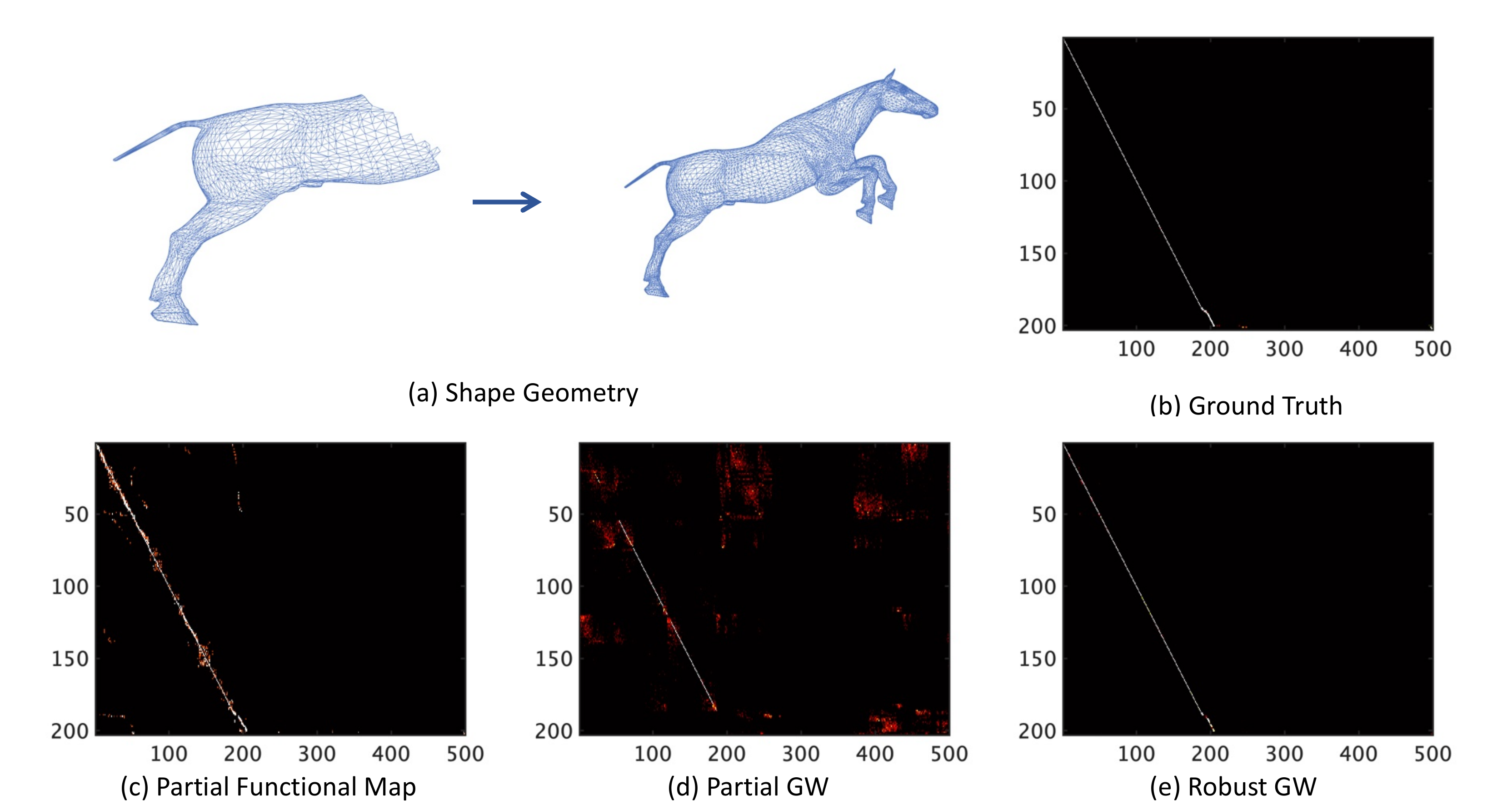}
    \caption{(a): 3D shape geometry of the source and target; (b)-(e): visualization of ground truth, initial point obtained from the partial functional map, and the matching results of PGW and RGW.}
    \label{fig:example_horse}
\end{figure}


\end{document}